\documentclass{article} 

\usepackage[utf8]{inputenc} 
\usepackage[T1]{fontenc}    
\usepackage{hyperref}       
\usepackage{url}            
\usepackage{booktabs}       
\usepackage{amsfonts, amsmath, amsthm, mathtools}       
\usepackage{nicefrac}       
\usepackage{microtype}      
\usepackage{xcolor}         
\usepackage{wrapfig}

\usepackage[accepted]{icml2020}

\allowdisplaybreaks

\newtheorem{theorem}{Theorem}
\newtheorem{lemma}{Lemma}
\newtheorem{assumption}{Assumption}
\newtheorem{proposition}{Proposition}
\newtheorem{corollary}{Corollary}

\newcommand{\lft}{\mathrm{left}}
\newcommand{\rgt}{\mathrm{right}}
\newcommand{\idx}{\text{index}}
\newcommand{\rad}{\text{rad}}
\newcommand{\dist}{\text{dist}}

\DeclarePairedDelimiter\ceil{\lceil}{\rceil}
\DeclarePairedDelimiter\floor{\lfloor}{\rfloor}

\icmltitlerunning{Coarse-Grained Smoothness for RL in Metric Spaces}

\begin{document}

\twocolumn[

\icmltitle{Coarse-Grained Smoothness for RL in Metric Spaces}

\begin{icmlauthorlist}
\icmlauthor{Omer Gottesman}{brown}
\icmlauthor{Kavosh Asadi}{amazon}
\icmlauthor{Cameron Allen}{brown}
\icmlauthor{Sam Lobel}{brown}
\icmlauthor{George Konidaris}{brown}
\icmlauthor{Michael Littman}{brown}
\end{icmlauthorlist}

\icmlaffiliation{brown}{Brown University}
\icmlaffiliation{amazon}{Amazon Web Services}

\icmlcorrespondingauthor{Omer Gottesman}{omer\_gottesman@brown.edu}

\vskip 0.3in
]

\printAffiliationsAndNotice{}

\begin{abstract}

Principled decision-making in continuous state--action spaces is impossible without some assumptions. A common approach is to assume Lipschitz continuity of the Q-function. We show that, unfortunately, this property fails to hold in many typical domains. We propose a new coarse-grained smoothness definition that generalizes the notion of Lipschitz continuity, is more widely applicable, and allows us to compute significantly tighter bounds on Q-functions, leading to improved learning. We provide a theoretical analysis of our new smoothness definition, and discuss its implications and impact on control and exploration in continuous domains.

\end{abstract}

\section{Introduction}

Reinforcement learning (RL) aims to develop algorithms that learn optimal policies for sequential decision-making problems~\citep{sutton2018reinforcement}.  In RL, the dynamics of the environment are unknown, and therefore an agent must learn  using environmental interactions. While RL has been studied extensively when the set of environmental states is discrete, continuous domains pose a significant challenge:  impressive empirical results have been achieved thanks to advances in deep RL, but the theory underlying RL in continuous environments is still poorly understood. A recent body of work addresses this problem by assuming that the Q-function of the optimal policy is Lipschitz continuous \citep{ni2019learning, tang2020off, touati2020zooming}.  

The Lipschitz assumption is intuitive and allows strong theoretical results to be derived for a variety of algorithms, but we will show that it is, in fact, overly strong and limiting---and rarely satisfied in domains of interest. In particular, we argue that the key assumption about the smoothness of the Q-function is less about the Q-function itself and more about the ability of the metric to reflect the distance between states under the transition dynamics of the environment. Furthermore, rather than make assumptions on the structure of the Q-function, which are hard to confirm, our assumptions are over the dynamics of the domain, making them much easier to verify empirically or through domain knowledge. We then show that, by choosing a generalized smoothness definition that is qualitatively similar to Lipschitz continuity but designed to properly account for the topology of practical domains, theoretical guarantees similar to those provided by the Lipschitz continuity assumption can be obtained for a much wider range of domains. 

\section{Background and Notation}

We use the standard RL formulation of an MDP denoted by $\langle \mathcal{S}, \mathcal{A}, T, R, P_0, \gamma \rangle$, where $\mathcal{S}$, $\mathcal{A}$ and $\gamma$ are the state space, action space, and the discount factor, respectively. The next-state transition probabilities and reward functions are given by $T(s' | s, a)$ and $R(s, a)$, respectively. The initial state distribution is $P_0(s)$. We denote the state--action space as $\mathcal{X} = \mathcal{S} \times \mathcal{A}$ and a point in that space as $x = (s, a)$. The dimensionality of $\mathcal{X}$ is $D$.

A policy $\pi(a | s)$ gives the probability of taking an action at a given state. The value function is the expected return collected by following the policy from state $s$: $V^\pi(s) \coloneqq \mathrm{E} [ \sum_{t=0}^T \gamma^t r_t | a_t \sim \pi, s_0 = s]$. The function $Q^{\pi}(s, a)$ is the expected return for taking action $a$ at state $s$, and afterwards following $\pi$ in selecting future actions. An optimal policy, $\pi^*$, is defined as a policy that maximizes $V(s)$ for all $s$ and its value and Q-functions are denoted $V^*(s)$ and $Q^*(s, a)$, respectively.

We assume $\mathcal{X}$ and $\mathcal{S}$ are metric spaces and denote their distance functions as $d_{\mathcal{X}}(x, x')$ and $d_{\mathcal{S}}(s, s')$, respectively, but will drop the subscript when there is no ambiguity. Note that, in the more intuitive case where metrics are defined over $\mathcal{S}$ and $\mathcal{A}$, a metric over $\mathcal{X}$ can be easily defined as $d_{\mathcal{X}}(x, x') = d_{\mathcal{S}}(s, s') + C d_{\mathcal{A}}(a, a')$, for any $C \geq 0$. We leave further discussion regarding assumptions on the metric functions to later sections, as one of the main objectives of this paper is to establish what types of metrics are useful in an RL context.

\subsection{Lipschitz Continuity and Bounds}


The Lipschitz constant of a function $f$ is defined as 
\begin{align}
\label{eq:lip_constant_definition}
    L \equiv \sup \left\{ \frac{|f(x) - f(x')|}{d(x, x')}  \ \ \middle| \ \ x, x' \in \mathcal{X} \right\}.
\end{align}

If $L$ is finite, the function is said to be $L$-Lipschitz continuous. Intuitively, the Lipschitz constant can be thought of as the magnitude of the largest gradient of a continuous function, $f(x)$, over $\mathcal{X}$, and is infinite for discontinuous functions. If $L$ and the value of $f(x')$ at a set of points $x' \in \mathcal{X}$ are known, an upper and a lower bound for $f(x)$ can be obtained by
\begin{align}
\label{eq:lip_est}
    \hat{f}_{UB}(x) = \min_{x' \in \mathcal{X}} f(x') + d(x, x') L  \\ \nonumber
    \hat{f}_{LB}(x) = \max_{x' \in \mathcal{X}} f(x') - d(x, x') L,
\end{align}
as shown schematically in Fig.~\ref{fig:intuition} (a).

In the RL literature, these bounds are used to make generalizations about the Q-function that support reasoning about which areas of the state--action space should still be explored.  

\section{Related Work}

Numerous papers have employed Lipschitz assumptions to obtain sample-complexity results for exploration. \citet{pazis_and_parr} introduced a PAC-learning algorithm in metric spaces that maintains a set of known state--action pairs and uses an approximate nearest neighbor approach. Similarly, \citet{Q_learning_nearest} introduced the nearest-neighbor Q-learning algorithm that can output a near-optimal approximation of the Q-function in finite time. \citet{regret_bounds_for_undiscounted} tackle the undiscounted episodic RL problem and introduced an algorithm whose regret scales sub-linearly with the number of episodes. This algorithm was later improved~\citep{improved_regret_bounds} and simplified~\citep{efficient_model_free_in_metric_spaces}. \citet{osband2014model} leveraged Lipschitz continuity of the value function to obtain regret bounds that depend on the dimensionality, rather than cardinality, of the underlying MDP.

A more recent line of work focuses on learning adaptive discretization techniques for exploration in metric spaces~\citep{adaptive_discretization_sinclair,touati2020zooming}. These algorithms leverage Lipchitz continuity assumptions to provide upper bounds on the value function of a candidate set of state--action pairs chosen more frequently from promising areas, while maintaining optimism in the face of uncertainty, and are inspired by zooming approaches from the bandit literature~\citep{zooming_kleinberg,zooming_akshay}.

Lipschitz assumptions have been used to tackle numerous other challenges in RL. For example, \citet{lipschitz_pg} use Lipschitz assumptions to provide a new policy-gradient algorithm that can adaptively choose a learning rate that yields monotonic improvements in the policy-gradient objective. \citet{asadi2018lipschitz} show that having access to Lipschitz models is a key ingredient for combating the compounding-error problem in model-based reinforcement learning. \citet{safe_model_based} give a Lyapnunov approach, and \citet{chandak2020towards} propose a Seldonian algorithm to maintaining safety in RL that leverages Lipschitz assumptions on the model. \citet{lipschitz_non_staionary} propose a worst-case approach to addressing non-stationary problems where the rate of non-stationarity with respect to time satisfies a Lipschitz assumption. \citet{tang2020off} introduce Lipschitz Value Iteration, an algorithm that gradually tightens the range of Q-value estimates to perform off-policy evaluation. \citet{deep_mdp} lean on Lipschitz assumptions to build a representation and state abstraction that is conducive to value-function optimization, and \citet{omer_combining} use similar Lipschitz assumptions for combining different estimators for off-policy evaluation and applying the resultant estimator to medical applications. Lipschitz continuity has also been utilized in the context of lifelong RL~\citep{lecarpentier2020lipschitz}.

Closely related to our work, alternative assumptions exist in the literature that, akin to Lipschitz continuity, capture the intuitive notion of smoothness in alternative ways. For example, \citet{metric_e3} make a local-modeling assumption---even more stringent than the Lipschitz assumption---that they use to extend the $E^3$ algorithm \citep{e3} to metric spaces. Finally, \citet{touati2020zooming} allow for a generalized variant of the Lipschitz assumption where an additive error term may exist, and show that their exploration algorithm is robust to such mis-specifications. A conceptual advantage of our generalized smoothness assumption compared to \citet{touati2020zooming}, is that our approach is explicit in defining a length-scale over which changes of the Q-function are ignored, allowing for a principled way of specifying exactly how the Lipschitz smoothness is violated, as well as quantifying the smoothness of the function over multiple length-scales.

More generally, Lipschitz continuity has garnered attention in other areas of machine learning and in particular in the deep learning literature. Most notably, Lipschitz properties have been used to provide norm-based capacity control~\citep{neyshabur2015norm}, to learn generative models for matching distributions in high-dimensional spaces~\citep{arjovsky2017wasserstein}, and for combating adversarial examples and improving the robustness of neural networks against malevolent actors~\citep{finlay2019improved}.

\section{Motivating Example: Q-function Discontinuities in Continuous Riverswim}
\label{sec:intuition}

To provide intuition for why alternative smoothness measures to Lipschitz continuity should be explored, we examine the smoothness properties of the Q-function in a simple environment with characteristics we believe are common to many continuous control tasks of interest; we formalize these characteristics in Sec.~\ref{sec:q_func_structure}. In many domains of interest, such as navigation and robotics, the reward function is derived from attempting to reach a set of goal states, and is thus discontinuous. We will assume the metric function, $d_{\mathcal{S}}(s, s')$, quantifies how quickly an agent can transition between states---we will later argue that this assumption, although strong, can replace the smoothness assumption on the Q-function, and is much easier to verify or enforce at problem-formulation time.

The particular environment we use for illustration is a continuous version of the riverswim environment \citep{strehl2008analysis}. The 1D state is a real number, the initial state is $s_0 = 0$, and the continuous actions are $a \in [-a_{\max}, a_{\max}]$. The transition function is $s_{t+1} = s_t + a_t - c$, where $0 \leq c \leq a_{\max}$. The dynamics can be seen as modeling swimming against a current. An episode ends with a small positive reward, $r_{\lft}$, upon reaching $s \leq -1$, and a larger reward, $r_{\rgt} > r_{\lft}$, upon reaching $s \geq 1$. This is a good example of an environment in which an agent must explore effectively, or it will never discover the large reward at $s \geq 1$.

Here, the transition function is continuous both with respect to the state and actions. However, the goal-based reward function results in a discontinuous Q-function. The optimal policy is to always take action $a=a_{\max}$.\footnote{If $r_{\lft} > r_{\rgt} \gamma^{\floor{\frac{2}{a_{\max} - c}}}$ the optimal policy will have states from which it will move left, but for simplicity we ignore that region of the parameter space.} The value function and Q-function for the optimal policy is
\begin{figure}[t]
\vskip 0.2in
\begin{center}
\includegraphics[width=0.48\columnwidth]{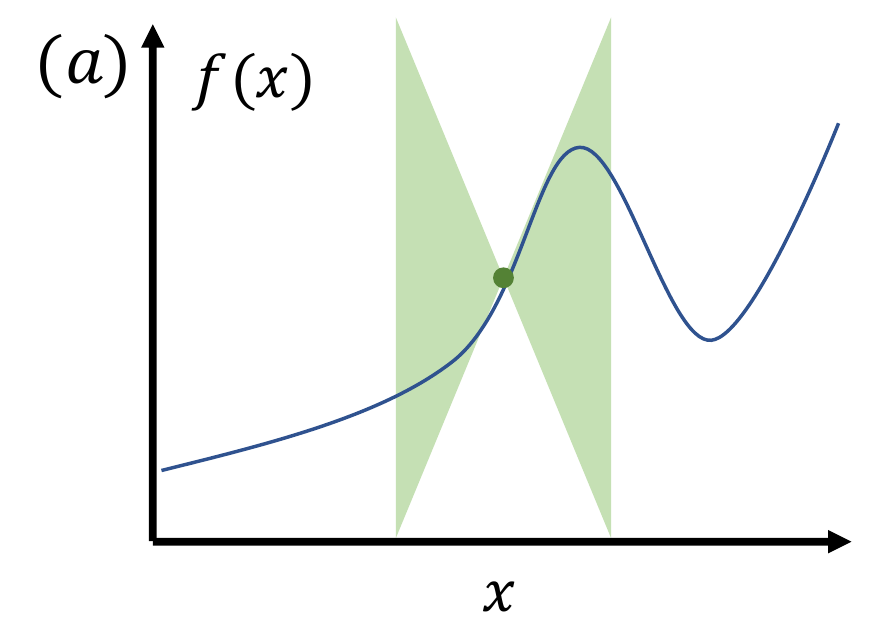}
\includegraphics[width=0.48\columnwidth]{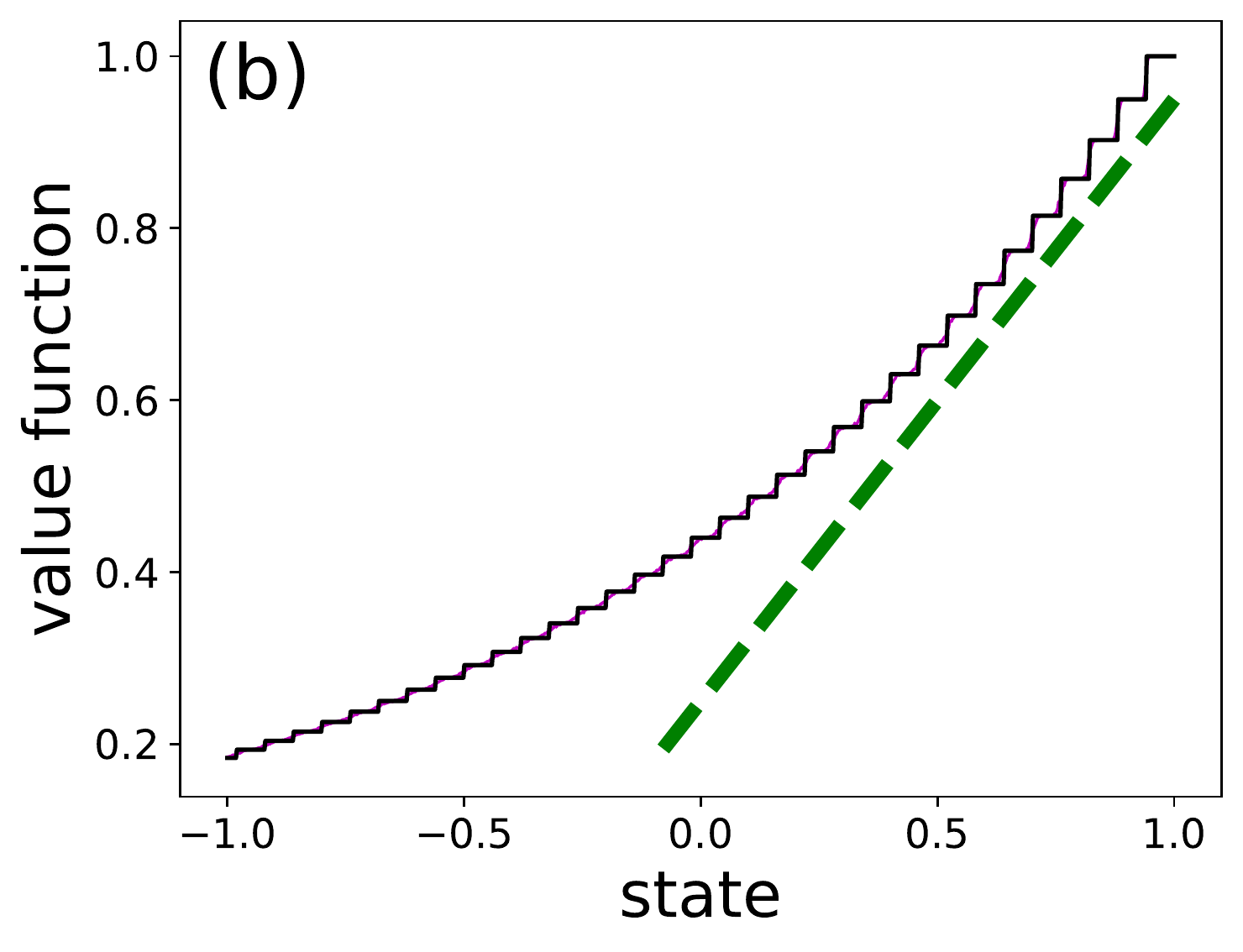}
\includegraphics[width=0.48\columnwidth]{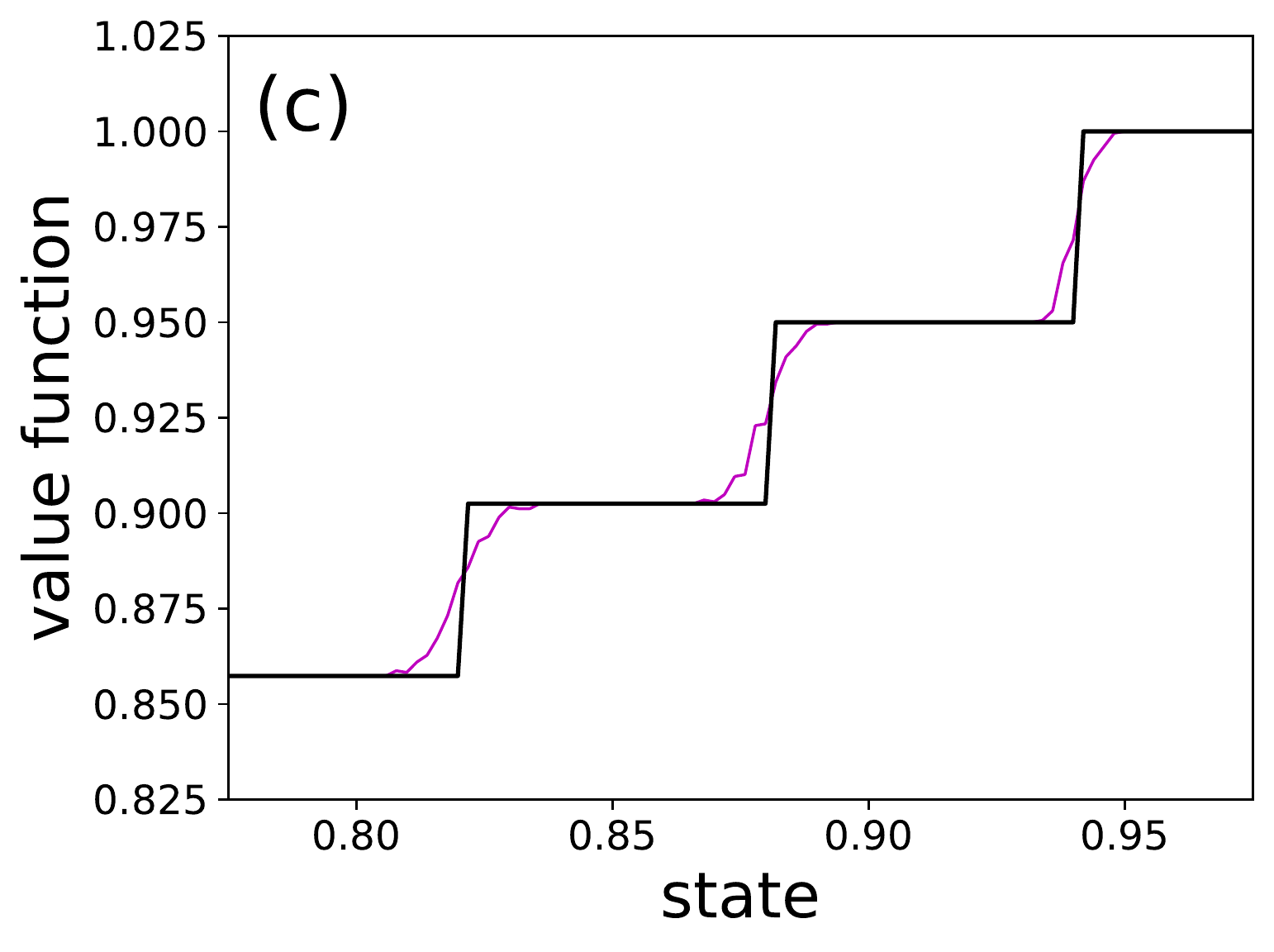}
\includegraphics[width=0.48\columnwidth]{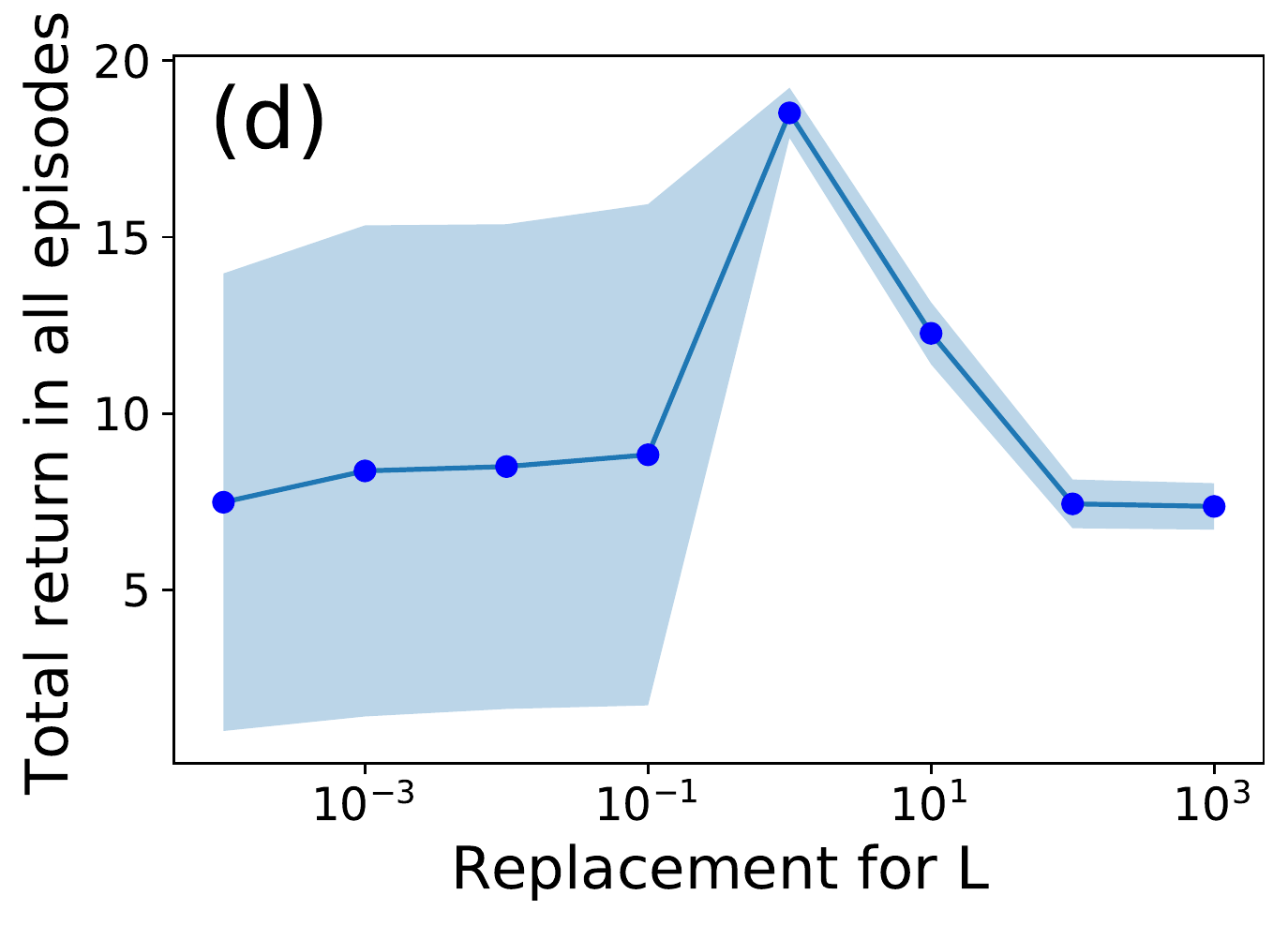}
\caption{(a) Schematic of Lipschitz bounds. (b) The value function for the optimal policy in the continuous riverswim environment for both deterministic (black) and stochastic (magenta) dynamics. The dashed green line represents a coarse grained slope of the function. (c) Magnification of the previous plot. (d) Performance of the algorithm presented in \citet{ni2019learning} for different replacement values for $L$.}
\label{fig:intuition}
\end{center}
\vskip -0.2in
\end{figure}

\begin{align}
    V^*(s) &= r_{\rgt} \gamma^{\lceil \frac{1 - s}{a_{\max} - c} \rceil}, \\
    Q^*(s, a) &= \begin{cases} 
      r_{\lft} & s + a \leq c - 1  \\
      \gamma V^*(s + a - c) & c - 1 < s + a < c + 1 \\
      r_{\rgt} & c + 1 \leq s + a
   \end{cases},
\end{align}
where $\lceil \cdot \rceil$ denotes rounding up to the closest integer. Note that there is a discontinuous jump in $Q^*$ at every state where the number of steps required to reach the goal increases by one, as well as when an action will lead to such a state. Thus, $Q^*$ has discontinuities with respect to both states and actions and clearly violates the Lipschitz property. In Sec.~\ref{sec:q_func_structure}, we argue that this phenomenon is common to a wide variety of domains.

In Fig.~\ref{fig:intuition} (b), we plot $V^*$ in black and see that, despite the discontinuous structure of the value function, in any region in the state space, a relatively clear slope can be seen if one ignores the discontinuous steps---qualitatively approximated by the green line. Furthermore, we also plot (in magenta) the empirically estimated state values when Gaussian noise with standard deviation of $0.03 a_{\max}$ is added to the transition (Fig.~\ref{fig:intuition} (c) zooms in on the upper right part of Fig.~\ref{fig:intuition} (b) for clarity). Although adding stochasticity eliminates the discontinuity of the Q-function, it is still the case that its maximal gradient is much larger than its coarse-grained rate of change.

The Lipschitz constant is usually used to bound the Q-function in one region of the state--action space based on observations in other regions. Therefore, it seems intuitive that it would be better to use the effective slope rather than the true ``infinite'' Lipschitz constant. In Sec.~\ref{sec:altenative_smoothness_measures}, we formalize this idea.

To illustrate why the coarse-grained slope might be a better measure of the smoothness of the Q-function, we apply the control algorithm proposed by \citet{ni2019learning} to the riverswim environment with different inputs as the Lipschitz-constant value, and plot the total reward obtained over 50 episodes as a function of this value. Despite the true Lipschitz constant being ``infinite'', we see that using large replacement for $L$ leads to poor performance due to over-exploration. The algorithm performs best for an intermediate replacement for $L \approx 1$, which tellingly matches the largest coarse-grained slope shown by the green line in Fig.~\ref{fig:intuition} (d).\footnote{Although the Lipschitz constant of the Q-function is defined over the state--action space, here we approximate it by the slope in the $a=a_{\max}$ subspace.} Note also that using too small of a replacement for $L$ leads to poor exploration as seen by the low mean performance and large variance for these values. The results demonstrate that the Lipschitz constant is not necessarily the smoothness measure we should be using in RL algorithms, and therefore alternative measures should be explored.

\section{Constraints on the Geometry of Q-functions}
\label{sec:q_func_structure}

\begin{figure}{}
\begin{center}
\includegraphics[width=0.8\columnwidth]{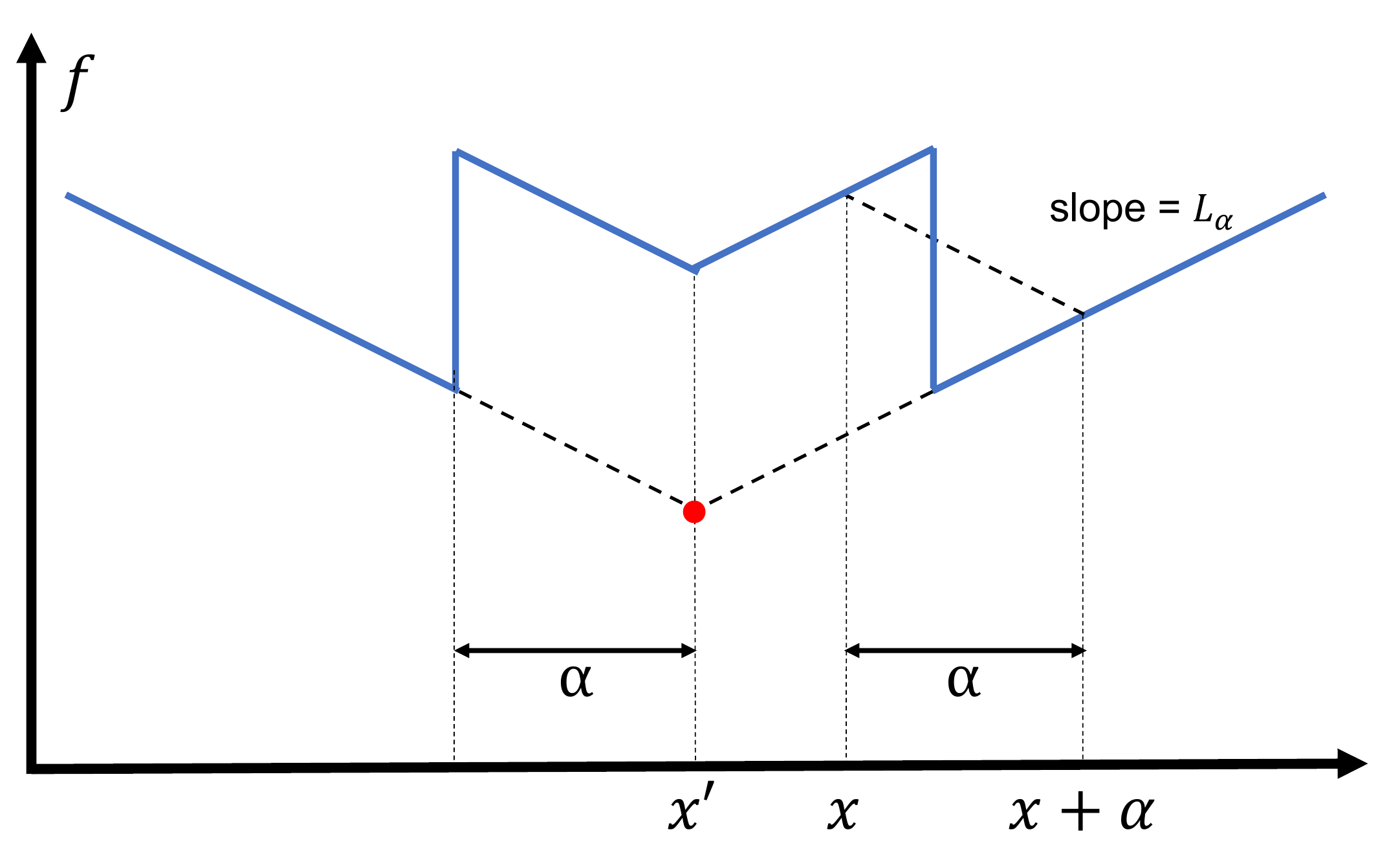}
\caption{Schematic demonstrating how the upper bound of a function is derived based on knowledge of its value at a point $x$.}
\label{fig:l_alpha_bounds_derivation}
\end{center}
\end{figure}

Motivated by the observations in the previous section, we now show that discontinuous Q-functions will be the norm rather than the exception, as they arise under conditions very common in RL. Therefore, care must be taken when designing algorithms that assume Lipschitz continuity. On the other hand, we show that these discontinuities can be bounded by making some assumptions regarding the metric over $\mathcal{S}$, which motivates the introduction, in Sec.\ref{sec:altenative_smoothness_measures}, of an alternative smoothness measure that can handle these discontinuities.

The key assumption we make formalizes the intuitive notion that the metric over the state space quantifies how quickly an agent can transition between states.

\begin{assumption}
\label{ass:metric_structure_1}
For any two states $s, s' \in \mathcal{S}$ such that $d(s, s') \leq d_{\min}$, we have that (i)  with probability $1-\delta$, $s'$ can be reached from $s$ in at most $k$ time-steps, and (ii) with probability 1, the agent can remain within a distance of less than $d_{\min}$ from $s'$.
\end{assumption}

This assumption can be applied to deterministic continuous domains (in which case $\delta = 0$) or stochastic discrete domains when a suitable metric is provided (or stochastic continuous domains under discretization). In Appendix \ref{appendix:assumption_modification}, we analyze how Assumption~\ref{ass:metric_structure_1} can be modified for stochastic continuous domains, and how to relax part (ii) of the assumption. Note that such a metric is not always trivial to obtain---consider for example a continuous 2D maze. Two points on opposing sides of a wall may be very close in Euclidean distance, but it may take the agent many steps to move from one to the other. However, such assumptions, or ones similar in nature, are arguably required for any RL approach in continuous domains where learning requires generalization to unobserved states. Furthermore, such assumptions are easier to validate or design into the problem formulation than assumptions on the structure of the Q-function. While it is often difficult to know what the $k$-step reachable neighborhood for an MDP is, it is often possible to do so for the special case of $k=1$, allowing for estimation of a $d_{\min}$ value for which Assumption \ref{ass:metric_structure_1} holds.

For deterministic continuous domains, Assumption \ref{ass:metric_structure_1} leads immediately to discontinuities, as we saw in Sec.~\ref{sec:intuition}. The simplest example demonstrating this property is to consider the boundary of a region from which a terminal goal state can be reached in one step (such a boundary arises by taking $k=1$ in Assumption~\ref{ass:metric_structure_1}). The value of all states within the boundary will be the reward, $r$, for that goal state, while states at an infinitely small distance outside that boundary will have value at most $\gamma r$. While adding stochasticity to the dynamics might remove the strict discontinuities, as long as the transition noise is much smaller than the average step sizes, local gradients, while finite, may still be very large, as can be seen in Fig.~\ref{fig:intuition} (c).

While the Q-function may have discontinuous ``jumps'' (infinite gradients), Assumption~\ref{ass:metric_structure_1} also implies that the \emph{magnitude} of such ``jumps'' can be bounded.

\begin{theorem}
\label{thm:max_jump_in_mdp}
Under Assumption~\ref{ass:metric_structure_1},
for any two points $x_1, x_2 \in \mathcal{X}$ such that $d(s_1, s_2) \leq d_{\min}$, the difference $\Delta Q^* \equiv |Q^*(x_1) - Q^*(x_2)|$ is bounded by
\begin{align}
    \Delta Q^* \leq \frac{1 - \gamma^k}{1 - \delta \gamma^k} \left( Q^*_{{\max},(1,2)} - \frac{r_{\min}}{1 - \gamma} \right),
\end{align}
where $Q^*_{{\max},(1,2)} \equiv \max(Q^*(x_1), Q^*(x_2))$.
\end{theorem}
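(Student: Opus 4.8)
The plan is to relabel so that $Q^*(x_1)\ge Q^*(x_2)$ (hence $Q^*_{\max,(1,2)}=Q^*(x_1)$; this is without loss of generality by the symmetry of $x_1,x_2$), and then to establish the equivalent lower bound
\begin{align}
    Q^*(x_2)-\tfrac{r_{\min}}{1-\gamma}\ \ge\ \tfrac{(1-\delta)\gamma^k}{1-\delta\gamma^k}\bigl(Q^*(x_1)-\tfrac{r_{\min}}{1-\gamma}\bigr),
\end{align}
which rearranges exactly to the stated inequality. It is convenient to work with the nonnegative quantity $Q^*(x)-\tfrac{r_{\min}}{1-\gamma}$ (nonnegative since $Q^*\ge \tfrac{r_{\min}}{1-\gamma}$ always), or equivalently to reduce to $r_{\min}=0$ by subtracting the constant $r_{\min}$ from every reward, which shifts every value and leaves $\Delta Q^*$ unchanged. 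After this normalization the goal is just $Q^*(x_2)\ge \tfrac{(1-\delta)\gamma^k}{1-\delta\gamma^k}\,Q^*(x_1)$, now with $Q^*\ge 0$ everywhere.

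The heart of the proof is to lower-bound the value obtainable starting near $s_2$ by an explicit non-stationary policy that repeatedly ``homes in on $s_1$.'' Since $d(s_1,s_2)\le d_{\min}$, Assumption~\ref{ass:metric_structure_1}(i) says that with probability at least $1-\delta$ the agent can reach $s_1$ from $s_2$ within $k$ steps; on that event the policy switches to $\pi^*$ and collects at least $V^*(s_1)\ge Q^*(x_1)$ from $s_1$ onward. With the complementary probability (at most $\delta$) it fails to reach $s_1$ in $k$ steps, but Assumption~\ref{ass:metric_structure_1}(ii) guarantees it can stay within $d_{\min}$ of $s_1$, so after those $k$ steps it is again in a state to which Assumption~\ref{ass:metric_structure_1} applies (target still $s_1$) and it restarts. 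Let $v$ be the worst-case value this policy guarantees from an arbitrary state within $d_{\min}$ of $s_1$. Conditioning on the success/failure of each length-$k$ block, using that shorter successful blocks only help (their discount is at least $\gamma^k$) and that any reward collected during transit is nonnegative, gives the self-referential inequality
\begin{align}
    v\ \ge\ (1-\delta)\,\gamma^{k}\,V^*(s_1)\ +\ \delta\,\gamma^{k}\,v,
\end{align}
and since $\delta\gamma^k<1$ this unrolls into a convergent geometric series and solves to $v\ge \tfrac{(1-\delta)\gamma^k}{1-\delta\gamma^k}V^*(s_1)\ge \tfrac{(1-\delta)\gamma^k}{1-\delta\gamma^k}Q^*(x_1)$. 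In particular $V^*(s_2)\ge v$, and substituting back and undoing the normalization produces the claimed bound.

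I expect two steps to need care. The routine-but-fiddly one is making the renewal bound rigorous: taking expectations over each block, justifying the geometric unrolling with ratio $\delta\gamma^k$, and checking that replacing ``at most $k$ steps'' by a uniform discount $\gamma^k$ is a genuine lower bound; this is exactly where the reduction to nonnegative rewards earns its keep, since otherwise a short transit block would contribute terms of the wrong sign. The genuinely delicate point is the first action at $x_2=(s_2,a_2)$: the homing policy literally lower-bounds $V^*(s_2)$, whereas the claim is about $Q^*(s_2,a_2)$, so one must either read the statement as comparing the two points at a common action (with the dynamics-aware metric forcing the successor states again to lie within $d_{\min}$, so the argument recurs one level down through the Bellman equation) or argue that the forced initial step can be folded into the first navigation block. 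The stochasticity of the transitions, once this scaffolding is in place, is handled purely by conditioning and linearity of expectation.
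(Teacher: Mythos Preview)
Your proposal is correct and follows essentially the same approach as the paper: both lower-bound the value obtainable from $s_2$ via a ``homing'' policy that makes repeated $k$-step attempts to reach $s_1$ (succeeding with probability $1-\delta$ per block, then switching to $\pi^*$), with your renewal inequality $v\ge(1-\delta)\gamma^k V^*(s_1)+\delta\gamma^k v$ being algebraically identical to the paper's explicit geometric sum over the number $n_k$ of failed blocks with $P(n_k)=\delta^{n_k-1}(1-\delta)$. Your reduction to $r_{\min}=0$ by shifting all rewards is a clean bookkeeping device the paper does not use, and you are right to flag the $Q^*(x_2)$ versus $V^*(s_2)$ subtlety at the forced first action---the paper's own proof simply writes $Q^*(x_2)\ge\ldots$ for the homing policy's value from $s_2$ and glosses over this same point.
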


The proof can be found in Appendix \ref{appendix:proof_max_jump_in_mdp}, but the main idea is to note that, for any two states less than $d_{\min}$ from each other, the agent can transition from the lower value state to the higher value state, and then follow the policy from the higher value state. Thus, all that is needed to complete the proof is to compute the rewards and discount factor accumulated along the way between the two states.

The bound in Theorem~\ref{thm:max_jump_in_mdp} is a local one in the sense that it depends on the Q-values of the state--action points compared, and reflects the fact that, if the values of the points are small, the difference between their values will be comparatively small as well. (For example, note that the ``steps'' in Fig.~\ref{fig:intuition} (b) are smaller where the value function is lower.) However, by substituting $Q^*_{\max}$, the maximum value for states in the domain\footnote{$Q^*_{\max}$ is usually known and depends on the dynamics of the MDP. For example, it is $r_{\max}$ in domains where the episode ends when a goal is reached or $r_{\max}/(1 - \gamma)$ in infinite horizon domains.}, for $Q^*_{{\max},(1,2)}$, a corollary of Theorem~\ref{thm:max_jump_in_mdp} provides a looser bound for the difference between any two points in $\mathcal{X}$ that we can obtain without knowledge of their Q-values.
\begin{corollary}
\label{thm:max_jump_in_mdp_global}
For any two points $x_1, x_2 \in \mathcal{X}$ such that $d(s_1, s_2) \leq d_{\min}$, the difference $\Delta Q^* \equiv |Q^*(x_1) - Q^*(x_2)|$ is bounded by
\begin{align}
    \Delta Q^* \leq \Delta Q^*_{\max} \equiv \frac{1 - \gamma^k}{1 - \delta \gamma^k} \left( Q^*_{\max} - \frac{r_{\min}}{1 - \gamma} \right),
\end{align}
where $Q^*_{\max}$ is the maximal value of any state in the environment.
\end{corollary}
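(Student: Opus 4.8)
The plan is to obtain the corollary directly from Theorem~\ref{thm:max_jump_in_mdp} by a single monotonicity step, since the only change between the two statements is that the local quantity $Q^*_{{\max},(1,2)} = \max(Q^*(x_1), Q^*(x_2))$ is replaced by the global constant $Q^*_{\max}$, the maximal state value in the environment.

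First I would fix an arbitrary pair $x_1, x_2 \in \mathcal{X}$ with $d(s_1, s_2) \leq d_{\min}$ and apply Theorem~\ref{thm:max_jump_in_mdp} to get
\begin{align}
\Delta Q^* \leq \frac{1 - \gamma^k}{1 - \delta \gamma^k}\left(Q^*_{{\max},(1,2)} - \frac{r_{\min}}{1-\gamma}\right).
\end{align}
Next I would verify that the prefactor $(1-\gamma^k)/(1-\delta\gamma^k)$ is nonnegative: with $\gamma \in [0,1)$ and integer $k \geq 1$ we have $\gamma^k \in [0,1)$, so the numerator is positive, and with $\delta \in [0,1]$ we have $\delta\gamma^k < 1$, so the denominator is positive as well. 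Since by definition $Q^*_{{\max},(1,2)} = \max(Q^*(x_1), Q^*(x_2)) \leq Q^*_{\max}$, subtracting the common term $r_{\min}/(1-\gamma)$ and multiplying by the nonnegative prefactor preserves the inequality, so chaining with the display above gives
\begin{align}
\Delta Q^* \leq \frac{1 - \gamma^k}{1 - \delta \gamma^k}\left(Q^*_{\max} - \frac{r_{\min}}{1-\gamma}\right) = \Delta Q^*_{\max}.
\end{align}
As $x_1, x_2$ were arbitrary subject only to the distance constraint, this is exactly the claimed bound.

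There is essentially no obstacle here: the corollary is a deliberate weakening of Theorem~\ref{thm:max_jump_in_mdp}, obtained by overestimating one term with a bound that requires no knowledge of $Q^*(x_1)$ or $Q^*(x_2)$. The only point worth a moment's care is confirming that the coefficient multiplying the parenthesized expression cannot change sign, so that replacing $Q^*_{{\max},(1,2)}$ by the larger value $Q^*_{\max}$ genuinely loosens rather than tightens the inequality; this is immediate from the standing ranges $0 \le \gamma < 1$, $0 \le \delta \le 1$, and $k \ge 1$. I would also note in passing that the resulting bound becomes quite loose precisely when $Q^*_{\max}$ greatly exceeds the local values $Q^*(x_1), Q^*(x_2)$, which is the expected trade-off for a uniform, value-agnostic bound.
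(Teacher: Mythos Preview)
Your proposal is correct and matches the paper's own argument: the corollary is obtained from Theorem~\ref{thm:max_jump_in_mdp} by the single substitution $Q^*_{{\max},(1,2)} \le Q^*_{\max}$, and you have been more explicit than the paper in checking that the prefactor $(1-\gamma^k)/(1-\delta\gamma^k)$ is nonnegative so that the monotone replacement goes in the right direction.
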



We have demonstrated that, in contrast to the key assumption commonly made in the continuous RL literature, Q-functions are discontinuous for many domains of interest. However, the additional constraints we have proved regarding the geometry of the Q-function allow us to propose a new, more meaningful smoothness measure for use in RL, which we do next.

\section{Coarse-Grained Smoothness Measures}
\label{sec:altenative_smoothness_measures}

The results presented in Sec.~\ref{sec:q_func_structure} suggest that, while the Q-function may be discontinuous locally, there are constraints on how much it can change over a given distance in $\mathcal{X}$. This insight leads us to propose an alternative smoothness measure that ignores discontinuities on the local scale, defined by the length parameter $\alpha$, with a corresponding coarse-grained smoothness parameter, $L_{\alpha}$:
\begin{align}
\label{eq:lip_alpha}
    L_{\alpha} \equiv \sup \left\{ \frac{|Q^*(x) - Q^*(x')|}{d(x, x')} \ \ \middle| \ \ d(x, x') \geq \alpha \right\}.
\end{align}
Note that, unlike $L$, for any bounded function over a bounded metric space, $L_{\alpha}$ exists and is finite for all $\alpha>0$. Furthermore, $L_{\alpha}$ is a monotonically decreasing function of $\alpha$, and $\lim_{\alpha \rightarrow 0} L_{\alpha}=L$ is the standard Lipschitz constant.


We can use $L_{\alpha}$ for upper and lower bounds of a function in a manner similar to the use of the traditional Lipschitz constant (Eq.~\ref{eq:lip_est}). The upper bound that the function at point $x'$ imposes on the function's value at point $x$ is simply $f(x') + L_{\alpha} d(x, x')$ for $d(x, x') > \alpha$. 

For $d' \equiv d(x, x') \leq \alpha$, we can bound $f(x)$ in the following way (Fig.~\ref{fig:l_alpha_bounds_derivation} schematically demonstrates how these bounds are derived): By noting that $f(x' + d' + \alpha) \leq f(x') + L_{\alpha}(d' + \alpha)$, then because $d(x' + d' + \alpha, x) = \alpha$, we get $f(x) \leq f(x' + d' + \alpha) + L_{\alpha} \alpha \leq f(x') + L_{\alpha}(d' + \alpha) + L_{\alpha} \alpha = f(x') + L_{\alpha}(d' + 2 \alpha)$. Applying the same logic to the lower bound we get:
\begin{align}
\label{eq:lip_alpha_est}
    \hat{f}_{UB}^{\alpha}(x) = \min_{x'} f(x') + g_x^{\alpha}(x') \\ \nonumber
    \hat{f}_{LB}^{\alpha}(x) = \max_{x'} f(x') - g_x^{\alpha}(x') ,
\end{align}
\begin{align}
\label{eq:lip_alpha_est_g}
    g_x^{\alpha}(x') \equiv \begin{cases} 
      L_{\alpha} (d(x, x') + 2 \alpha) & d(x, x') \leq \alpha \\
      L_{\alpha} d(x, x') & d(x, x') > \alpha.
   \end{cases}
\end{align}

The derivation of these bounds is presented in scalar (1D) notation but is easily applied to any normed metric over $\mathbb{R}^n$, by applying the 1D derivation along the unit vector pointing from $x$ to $x'$.

An important point to note is that, while this derivation provides bounds on points at any distance from each other, the $L_{\alpha}$ smoothness property makes an assumption \emph{only} on points that are at least $\alpha$ away from each other. These bounds could be replaced by simpler more intuitive bounds, such as $g_x^{\alpha}(x') = \alpha L_{\alpha}$ for $d(x, x') \leq \alpha$, but such bounds would require additional assumptions on the geometry of the Q-function at small length-scales, which is exactly what our formulation aims to avoid.

Also of note is that the $L_{\alpha}$ bounds are discontinuous, which would make them more difficult to work with using gradient-based algorithms. However, most Lipschitz-based methods are non-parametric, and therefore do not use a parameterized representation of the bound, and for such algorithms the discontinuity of the bounds will not pose a problem.

Taking the limit $d(x, x') \rightarrow 0$ in Eq.~\ref{eq:lip_alpha_est} and \ref{eq:lip_alpha_est_g} leads to the following proposition bounding the maximum discontinuity gap in a $L_{\alpha}$-smooth function:

\begin{proposition}
\label{thm:max_jump_given_l_alpha}
For an $L_{\alpha}$-smooth function $f$, for any two points $x$ and $x'$,
\begin{align}
    \lim_{d(x, x') \rightarrow 0} |f(x) - f(x')| < 2 L_{\alpha} \alpha.
\end{align}
\end{proposition}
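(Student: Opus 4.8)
The plan is to obtain the proposition directly from the interpolation bounds in Eq.~\ref{eq:lip_alpha_est} and Eq.~\ref{eq:lip_alpha_est_g} and then send $d(x,x') \to 0$. Fix two points $x, x'$ with $d' := d(x,x') \le \alpha$. Using $x'$ as the single witness in Eq.~\ref{eq:lip_alpha_est}, together with $g_x^\alpha(x') = L_\alpha(d' + 2\alpha)$ from Eq.~\ref{eq:lip_alpha_est_g} and the sandwich $\hat f_{LB}^\alpha(x) \le f(x) \le \hat f_{UB}^\alpha(x)$, we get $|f(x) - f(x')| \le L_\alpha(d' + 2\alpha)$. This single inequality is the only quantitative ingredient.

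To keep the argument self-contained I would re-derive that inequality rather than merely cite it, reproducing the two-hop argument given just before the proposition: let $y$ be the point on the ray from $x'$ through $x$ at distance $d' + \alpha$ from $x'$, so that $d(x', y) = d' + \alpha \ge \alpha$ and $d(x, y) = \alpha$. The coarse-grained smoothness property in Eq.~\ref{eq:lip_alpha} applies to both pairs $(x', y)$ and $(x, y)$ since each is separated by at least $\alpha$, so by the triangle inequality $|f(x) - f(x')| \le |f(x) - f(y)| + |f(y) - f(x')| \le L_\alpha \alpha + L_\alpha(d' + \alpha) = L_\alpha(d' + 2\alpha)$. For a general normed metric on $\mathbb{R}^n$ one runs this one-dimensional argument along the unit vector pointing from $x$ to $x'$, exactly as the text prescribes. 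Letting $d' \to 0$ then yields $\lim_{d(x,x')\to 0} |f(x) - f(x')| \le 2 L_\alpha \alpha$.

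The remaining issue, and the one I expect to be the real obstacle, is upgrading this $\le$ to the strict inequality claimed. Intuitively the two hops above cannot both become tight in the limit $d' \to 0$: tightness of the hop over $(x, y)$ would force the difference quotient of $f$ over a chord of length exactly $\alpha$ to equal $L_\alpha$, and tightness of the hop over $(x', y)$ would force the same for an abutting chord whose length tends to $\alpha$, with matching sign. I would try to make this rigorous by contradiction: assume a sequence of pairs $x_n, x_n'$ with $d(x_n, x_n') \to 0$ and $|f(x_n) - f(x_n')| \to 2 L_\alpha \alpha$, and extract from this nearly extremal configuration a pair of points still separated by at least $\alpha$ whose difference quotient exceeds $L_\alpha$, contradicting the definition of $L_\alpha$ as a supremum. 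The delicate part is that $L_\alpha$-smoothness constrains $f$ only on pairs at distance at least $\alpha$, so some care — and possibly a mild regularity hypothesis, or a compactness/separability argument on the metric space — is needed to pin down enough of the behavior of $f$ near the near-discontinuity to produce the offending pair; everything else is routine bookkeeping with the triangle inequality.
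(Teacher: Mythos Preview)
Your derivation of the non-strict bound $|f(x)-f(x')|\le L_\alpha\bigl(d(x,x')+2\alpha\bigr)$ via the two-hop auxiliary point, followed by sending $d(x,x')\to 0$, is exactly the paper's argument: the text simply states that the proposition follows by taking the limit in Eq.~\ref{eq:lip_alpha_est}--\ref{eq:lip_alpha_est_g} and offers no separate justification of the strict inequality.

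The contradiction argument you sketch for upgrading $\le$ to $<$ cannot succeed, because the strict inequality is in fact false. On $\mathbb{R}$ with the usual metric, take
\[
f(t)=\begin{cases}-ct, & t<0,\\ 2c\alpha - ct, & t\ge 0,\end{cases}
\]
for any $c>0$. Same-side pairs have difference quotient exactly $c$; for a cross pair $t<0\le t'$ at distance $d=t'-t\ge\alpha$ one computes $|f(t')-f(t)|/d=c\,|2\alpha/d-1|\le c$. Hence $L_\alpha=c$, yet the jump at the origin is exactly $2c\alpha=2L_\alpha\alpha$. Your heuristic that the two hops cannot both be tight is correct for each fixed $d'>0$ --- one of the two chords always carries slack of order $L_\alpha d'$ --- but that slack vanishes as $d'\to 0$, so it does not prevent the limiting jump from attaining $2L_\alpha\alpha$. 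The strict $<$ in the proposition appears to be a slip in the paper; its own derivation, like yours, only yields $\le$.
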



\begin{figure}[t]
\vskip 0.2in
\begin{center}
\includegraphics[width=0.32\columnwidth]{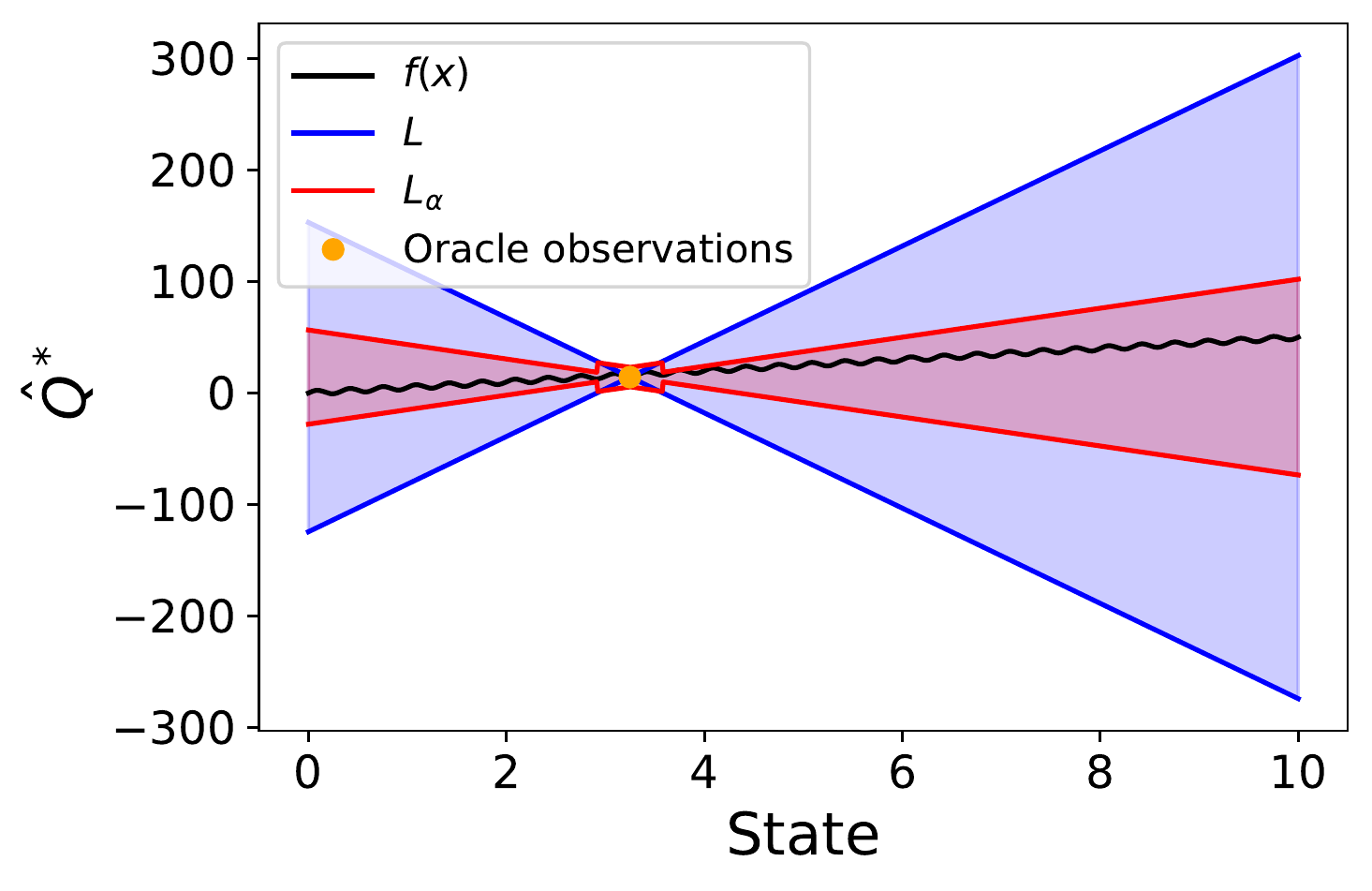}
\includegraphics[width=0.32\columnwidth]{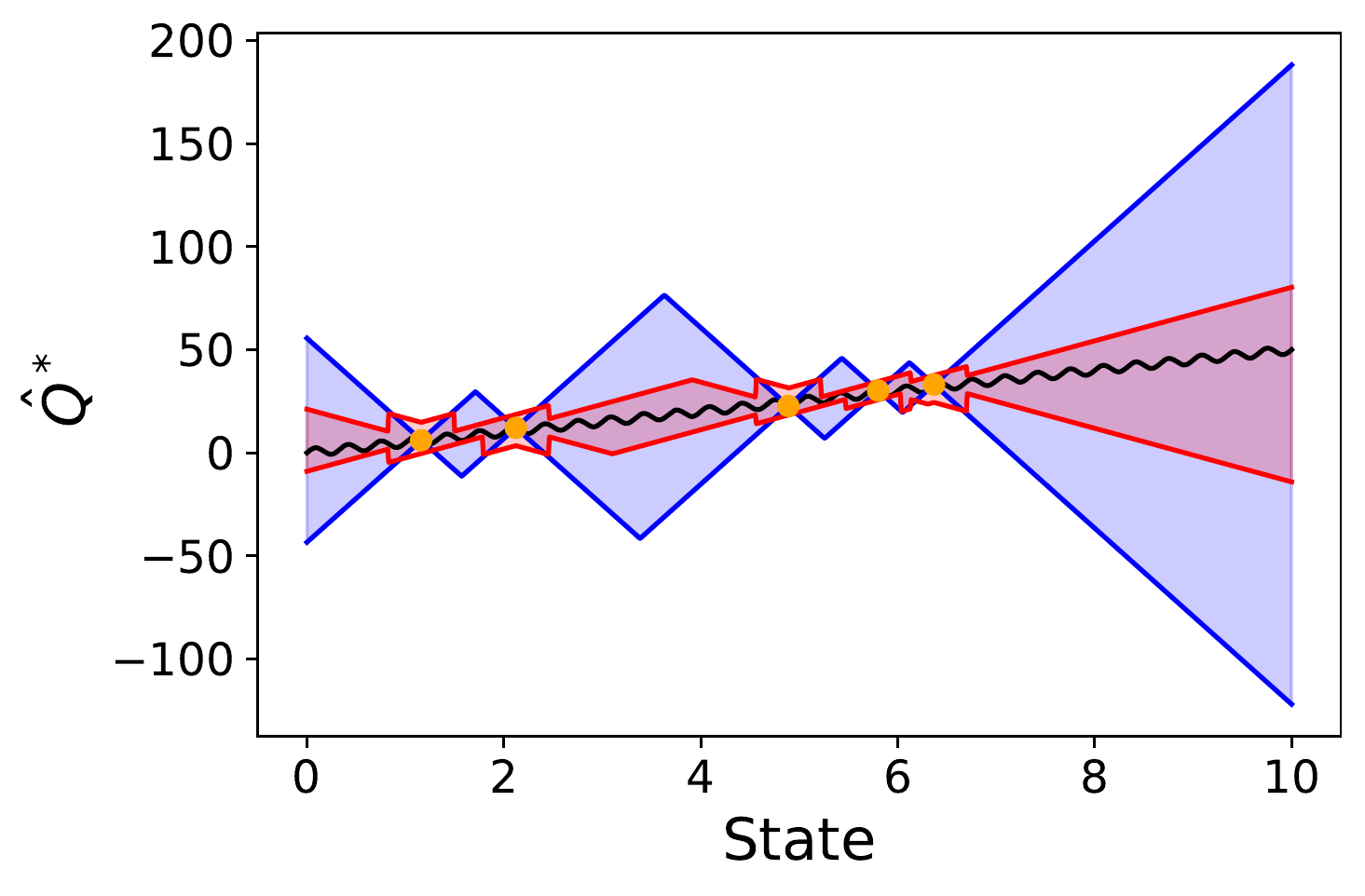}
\includegraphics[width=0.31\columnwidth]{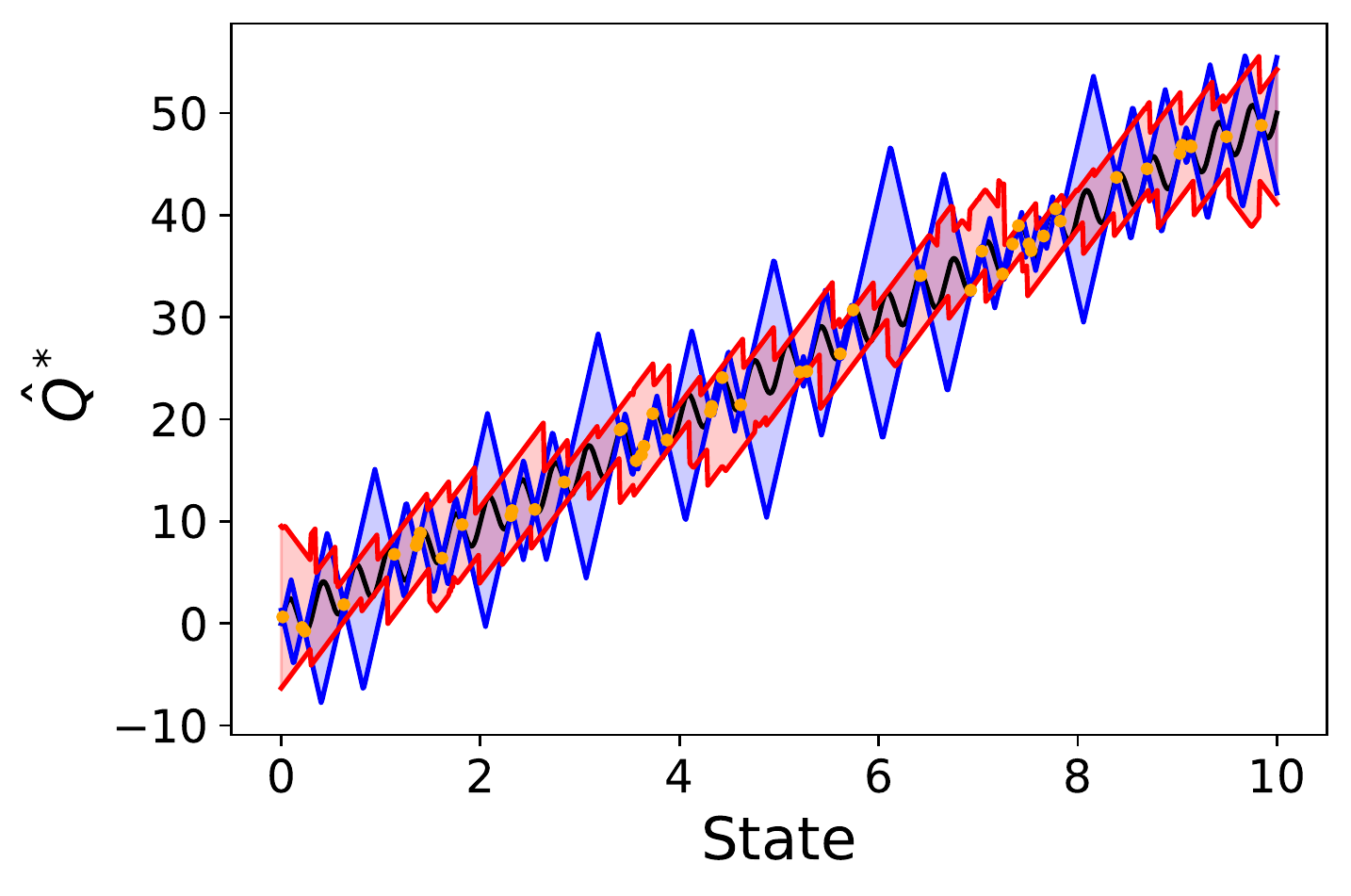}
\includegraphics[width=0.32\columnwidth]{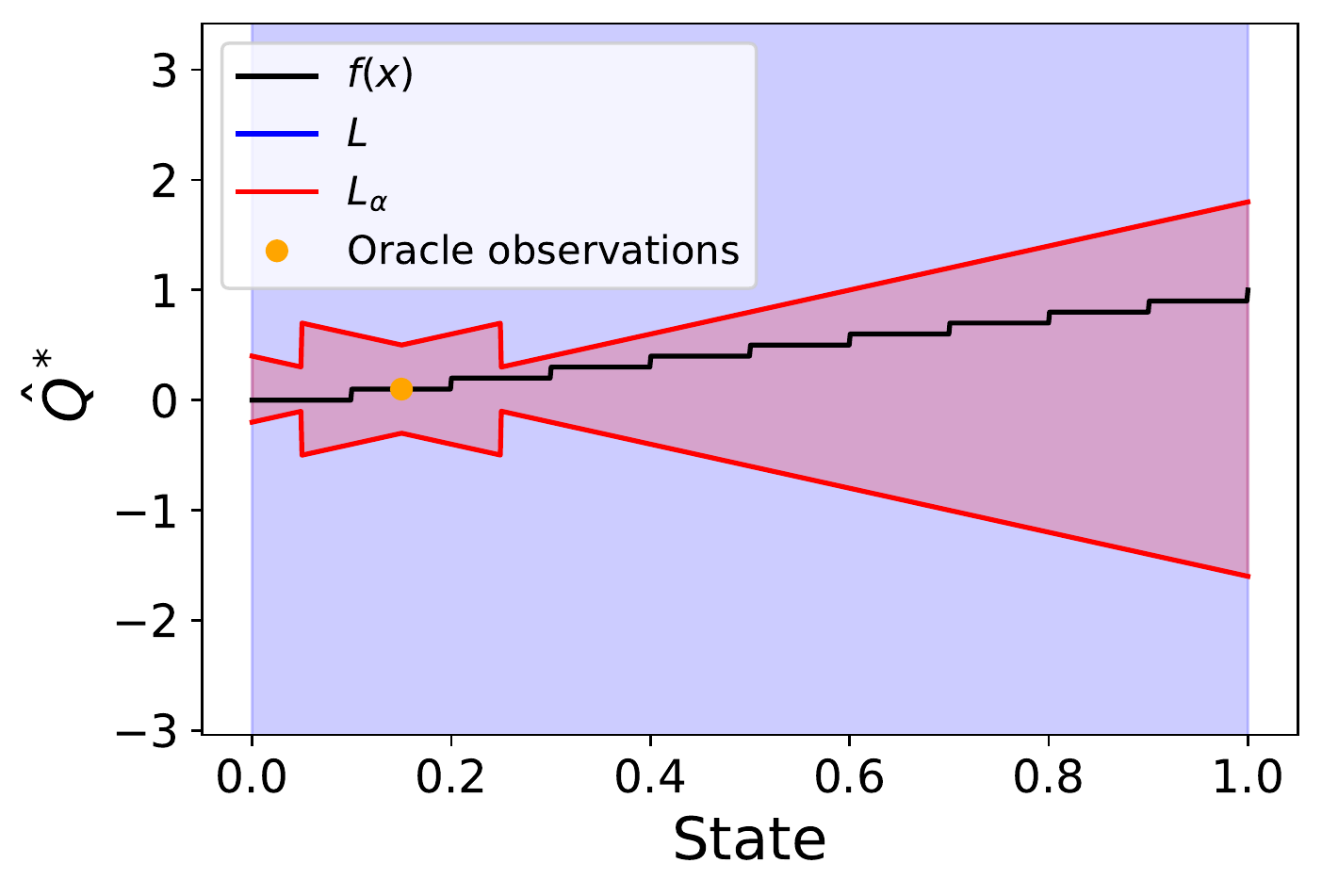}
\includegraphics[width=0.32\columnwidth]{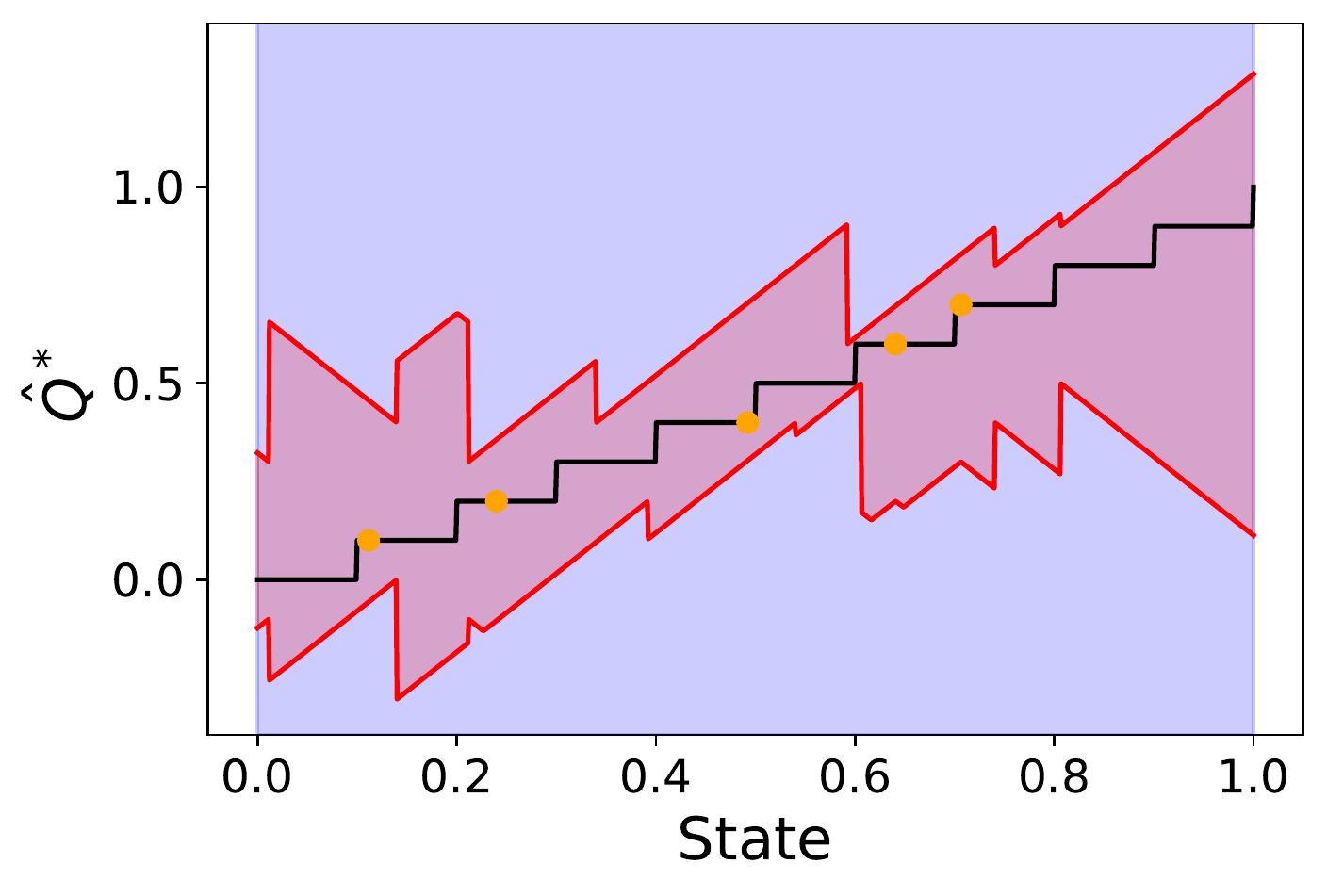}
\includegraphics[width=0.32\columnwidth]{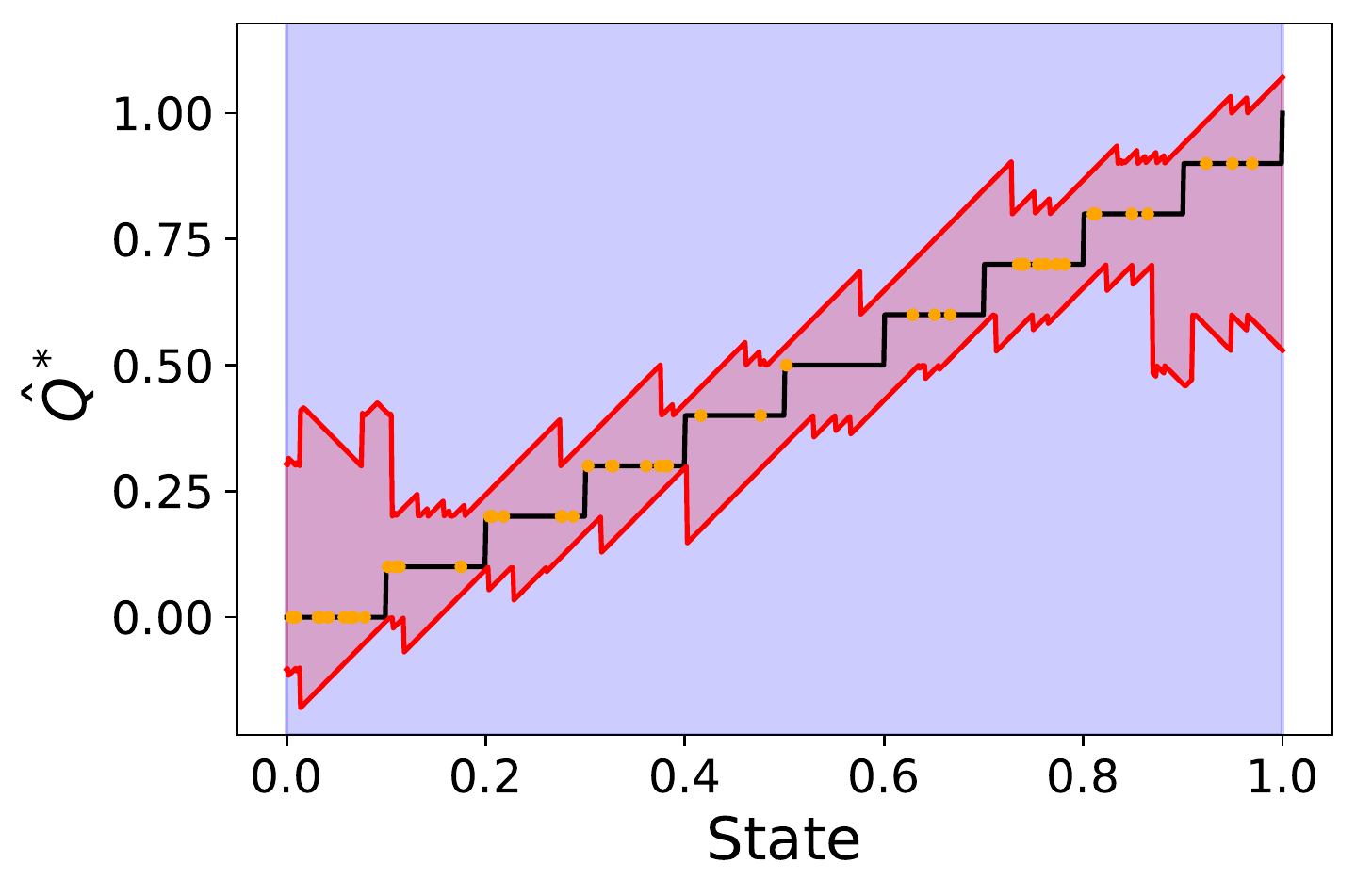}
\caption{Comparison of $L$ (blue) vs.\ $L_{\alpha}$ (red) bounds for a continuous (top) and discontinuous (bottom) function, with different numbers of sampled points.}
\label{fig:lip_estimation}
\end{center}
\vskip -0.2in
\end{figure}

To compare the $L_{\alpha}$ bounds with the traditional Lipschitz bounds, in Fig.~\ref{fig:lip_estimation}, we plot both bounds based on 1, 5, and 50 oracle observations for two functions: a sine function riding on top of a linear slope and a discontinuous step function.\footnote{In Appendix~\ref{appendix:details_of_functions}, we present the exact functions as well as discuss how their $L$ and $L_{\alpha}$ values are computed.} The $L_{\alpha}$ bounds are shown in red (for $\alpha$ equal to the wavelength of the sine function and the width of a step for the stairs function), while the standard Lipschitz bounds are shown in blue for comparison (note that for the stairs function $L$ is infinite, resulting in vacuous bounds, indicated as a blue ``background'').

Fig.~\ref{fig:lip_estimation} demonstrates the strength of using a coarse-grained Lipschitz constant: Because $L_{\alpha}$ could be orders of magnitude smaller than $L$ (even if $L$ is finite), $L_{\alpha}$ provides significantly tighter bounds for points far away from observations. A trade-off, however, is that $L_{\alpha}$ is unable to provide tight bounds close to observed points. (For the sine function, the red zone tends to lie inside of the blue zone, except close to the data points.)

A crucial advantage of the $L_{\alpha}$ formalism is that, unlike the Lipschitz constant, $L_{\alpha}$ can be upper bounded using empirical data. While empirically upper bounding the Lipschitz constant of a function is impossible, as infinitely large gradients may be ``hiding'' at infinitely small length-scales, $L_{\alpha}$ smoothness is insensitive to gradients on scales smaller than $\alpha$, and thus can be upper bounded by sampling the function at a density on the order of $\alpha^{-1}$.

\section{Theoretical Properties of $L_{\alpha}$ Bounds}

We now analyze $L_{\alpha}$ bounds from a pure function representation perspective (independent of RL). We first compare the trade-offs between $L$ and $L_{\alpha}$ bounds, and quantify over what fraction of the space the $L_{\alpha}$ bounds are tighter (Sec.~\ref{sec:l_vs_l_alpha}). In Sec.~\ref{sec:overcoming_cons}, we discuss the weakness of $L_{\alpha}$ bounds and introduce methods to derive even tighter bounds that mitigate these shortcomings.

\subsection{Comparing $L$ and $L_{\alpha}$ Bounds}
\label{sec:l_vs_l_alpha}

We first quantify when using $L_{\alpha}$ to bound a function will provide tighter bounds than the standard Lipschitz bounds. Obviously, if a function $f(x)$ is discontinuous, as is common for Q-functions, $L_{\alpha}$-bounds are trivially better because Lipschitz bounds do not exist. When the Lipschitz bound does exist, Theorem~\ref{thm:compare_est_methods} quantifies over what fraction of $\mathcal{X}$ the $L_{\alpha}$-bound is tighter than the Lipschitz bound. 

Before we can state Theorem~\ref{thm:compare_est_methods} (proof in Appendix~\ref{appendix:compare_est_methods_proof}), we first define several quantities and constants to make the presentation precise. First, the volume in $\mathcal{X}$ of a ball of radius $r$ is $C_D r^D$. For example, for a Euclidean metric in 2 and 3 dimensions, we have $C_D = \pi$ and $C_D = 4\pi/3$, respectively. To generalize the notion of a radius to non-spherical spaces, we also define the linear size of an arbitrarily shaped metric space $\mathcal{X}$, $l_{\mathcal{X}}$, as a function of the volume of $\mathcal{X}$, $V_{\mathcal{X}}$, through the relation $V_{\mathcal{X}} = C_{D_{\mathcal{X}}} (l_{\mathcal{X}})^D$. Thus, if $\mathcal{X}$ is spherical, we have $C_{D_{\mathcal{X}}} = C_D$, but the more general value of $C_{D_{\mathcal{X}}}$ allows for $l_{\mathcal{X}}$ to be defined for spaces of different shapes.

\begin{theorem}
\label{thm:compare_est_methods}
Let $f(x)$ be defined over the metric space $\mathcal{X}$, and assume the value of $f(x)$ is known at $N$ points. Assume $f(x)$ is Lipschitz continuous with Lipschitz constant $L$. For any value of $\alpha$ and its corresponding $L_{\alpha}$ over $f(x)$, define $l_{\alpha}$ to be
\begin{align}
\label{eq:switch_dist_between_est_method_effectiveness}
    l_{\alpha} = \begin{cases}
      \alpha & L \geq L_{\alpha} > L / 3 \\
      \frac{2 L_{\alpha} \alpha}{L - L_{\alpha}} & L / 3 \geq L_{\alpha}. \\
   \end{cases}
\end{align}
Then, if $\frac{C_D}{C_{D_{\mathcal{X}}}} \left( \frac{l_{\alpha}}{l_{\mathcal{X}}} \right)^D N < 1$, the fraction of volume of $\mathcal{X}$ over which the $L_{\alpha}$ bound is tighter than the Lipschitz bound is at least $1 - \frac{C_D}{C_{D_{\mathcal{X}}}} \left( \frac{l_{\alpha}}{l_{\mathcal{X}}} \right)^D N$.
\end{theorem}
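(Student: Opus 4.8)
The plan is to split the statement into a \emph{pointwise} comparison and a \emph{covering} (union-bound) estimate. I will read ``the $L_\alpha$ bound is tighter at $x$'' as saying that the $L_\alpha$ bound interval is contained in the Lipschitz one, i.e.\ $\hat f^{\alpha}_{UB}(x)\le\hat f_{UB}(x)$ and $\hat f^{\alpha}_{LB}(x)\ge\hat f_{LB}(x)$; by the symmetry between the upper and lower constructions in Eq.~\ref{eq:lip_alpha_est} it is enough to argue about the upper bound. The target then reduces to: (a) on every point $x$ whose distance to all $N$ observed points exceeds $l_\alpha$, the $L_\alpha$ bound is tighter; and (b) the set of points lying within distance $l_\alpha$ of some observed point occupies at most a fraction $\tfrac{C_D}{C_{D_{\mathcal X}}}\big(\tfrac{l_\alpha}{l_{\mathcal X}}\big)^{D}N$ of $\mathcal X$. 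I will assume throughout that $L_\alpha<L$, which is the only case of interest since equality gives no improvement.

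For step (a), I would first compare, for a single observed point $x'$ at distance $\rho=d(x,x')$ from $x$, the two per-point terms $g_x^{\alpha}(x')$ and $L\rho$ that sit inside the minima of Eqs.~\ref{eq:lip_est} and \ref{eq:lip_alpha_est}. Using the two branches of Eq.~\ref{eq:lip_alpha_est_g}: if $\rho>\alpha$ then $g_x^{\alpha}(x')=L_\alpha\rho<L\rho$; if $\rho\le\alpha$ then $g_x^{\alpha}(x')=L_\alpha(\rho+2\alpha)<L\rho$ exactly when $\rho>\tfrac{2L_\alpha\alpha}{L-L_\alpha}$ (here $L-L_\alpha>0$). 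Whether $\tfrac{2L_\alpha\alpha}{L-L_\alpha}\le\alpha$ or $>\alpha$ is precisely whether $L_\alpha\le L/3$ or $L_\alpha>L/3$, and a short check in each regime shows $g_x^{\alpha}(x')<L\rho\iff\rho>l_\alpha$ with $l_\alpha$ exactly as in Eq.~\ref{eq:switch_dist_between_est_method_effectiveness} (the two formulas agree at $L_\alpha=L/3$, both giving $l_\alpha=\alpha$). Now, if $d(x,x')>l_\alpha$ for \emph{every} observed $x'$, then $f(x')+g_x^{\alpha}(x')<f(x')+L\,d(x,x')$ term by term; taking a minimum over the observed points on both sides — by picking an index that attains the right-hand minimum — yields $\hat f^{\alpha}_{UB}(x)<\hat f_{UB}(x)$, and the analogous max argument gives $\hat f^{\alpha}_{LB}(x)>\hat f_{LB}(x)$. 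The ``far from all observed points'' hypothesis is genuinely needed: near an observed point the additive $2\alpha$ in $g_x^{\alpha}$ makes that point's $L_\alpha$ term large while its Lipschitz term is tiny, so a termwise comparison can fail there.

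For step (b), the ``bad'' region is contained in the union of the $N$ balls $B(x_i,l_\alpha)$; each has volume at most $C_D l_\alpha^{D}$ (a ball protruding outside $\mathcal X$ only overcounts), so the bad region has volume at most $N C_D l_\alpha^{D}$. Dividing by $V_{\mathcal X}=C_{D_{\mathcal X}}(l_{\mathcal X})^{D}$ bounds the bad fraction by $\tfrac{C_D}{C_{D_{\mathcal X}}}\big(\tfrac{l_\alpha}{l_{\mathcal X}}\big)^{D}N$, which the hypothesis forces to be $<1$; hence its complement, on which (a) guarantees the $L_\alpha$ bound is tighter, has relative volume at least $1-\tfrac{C_D}{C_{D_{\mathcal X}}}\big(\tfrac{l_\alpha}{l_{\mathcal X}}\big)^{D}N$, which is the claim. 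Note we only need the \emph{good} region to be large; the bad region may contain points where the $L_\alpha$ bound happens to remain tighter, but the bound is a one-sided guarantee and does not require analyzing it.

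I expect the covering step to be routine and the pointwise step to carry whatever difficulty there is: getting the three-way case distinction in Eq.~\ref{eq:switch_dist_between_est_method_effectiveness} exactly right, checking consistency at $L_\alpha=L/3$, and ensuring ``tighter'' is established simultaneously for the upper and lower constructions and only under the ``far from all observed points'' hypothesis. Minor points worth stating explicitly in the write-up: the strict inequalities hold off the measure-zero sphere $\{x:d(x,x')=l_\alpha\}$, so they do not affect volumes, and the degenerate case $L_\alpha=L$ is excluded (or, equivalently, handled by reading ``tighter'' as ``no worse'', in which case the bound is vacuous there).
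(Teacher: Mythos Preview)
Your proposal is correct and follows essentially the same route as the paper's own proof: a single-point comparison of $g_x^{\alpha}(x')$ versus $L\,d(x,x')$ to identify the crossover distance $l_\alpha$ (with the same case split at $L_\alpha=L/3$), extension to $N$ points by a termwise minimum argument, and then a union-bound covering estimate on the balls $B(x_i,l_\alpha)$. If anything, your write-up is more explicit about the edge cases (strict inequalities on the measure-zero sphere, the degenerate case $L_\alpha=L$) than the paper's appendix.
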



Intuitively, Theorem \ref{thm:compare_est_methods} states that, as long as observations are not so numerous as to cover $\mathcal{X}$ such that all observed points are less than a distance of $l_{\alpha}$ away from each other (note that $l_{\alpha} < \alpha$), $L_{\alpha}$-bounds are tighter than Lipschitz across a large fraction of the space. Importantly, Theorem~\ref{thm:compare_est_methods} shows that $L_{\alpha}$-bounds are tighter over a larger fraction of the space in the low data regime (small $N$) and exponentially better with increasing dimensionality. Furthermore, the margin by which $L_{\alpha}$ bounds are tighter than Lipschitz bounds also grows significantly in the low data regime (small $N$) and in high-dimensions. In that sense, the curse of dimensionality increases the relative advantage of $L_{\alpha}$ bounds over Lipschitz bounds.

\subsection{Overcoming the Shortcomings of $L_{\alpha}$ Bounds}
\label{sec:overcoming_cons}

As mentioned earlier, a major drawback of using $L_{\alpha}$ bounds is that, in the vicinity of observed points, they are looser than Lipschitz bounds, if those exist. In fact, even in the limit of infinite data, the uncertainty gap, $\hat{f}^{\alpha}_{UB}(x) - \hat{f}^{\alpha}_{LB}(x)$, will approach $2 L_{\alpha} \alpha$ rather than $0$, for all $x$. This issue can be addressed in two ways. In Sec.~\ref{sec:multiple_alpha}, we provide a method of using multiple values of $\alpha$ and their corresponding $L_{\alpha}$ values. Then, in Sec.~\ref{sec:relax_strictness}, we propose a method that provides tight bounds far from observed points with convergence of the uncertainty gap to zero, at the expense of providing bounds that are not strict, but where we can bound the violations, both in terms of magnitude and over the volume of $\mathcal{X}$ where these violations exist.

\subsubsection{Multiple Smoothness Length-Scales}
\label{sec:multiple_alpha}

\begin{figure}[t]
\vskip 0.2in
\begin{center}
\includegraphics[width=0.48\columnwidth]{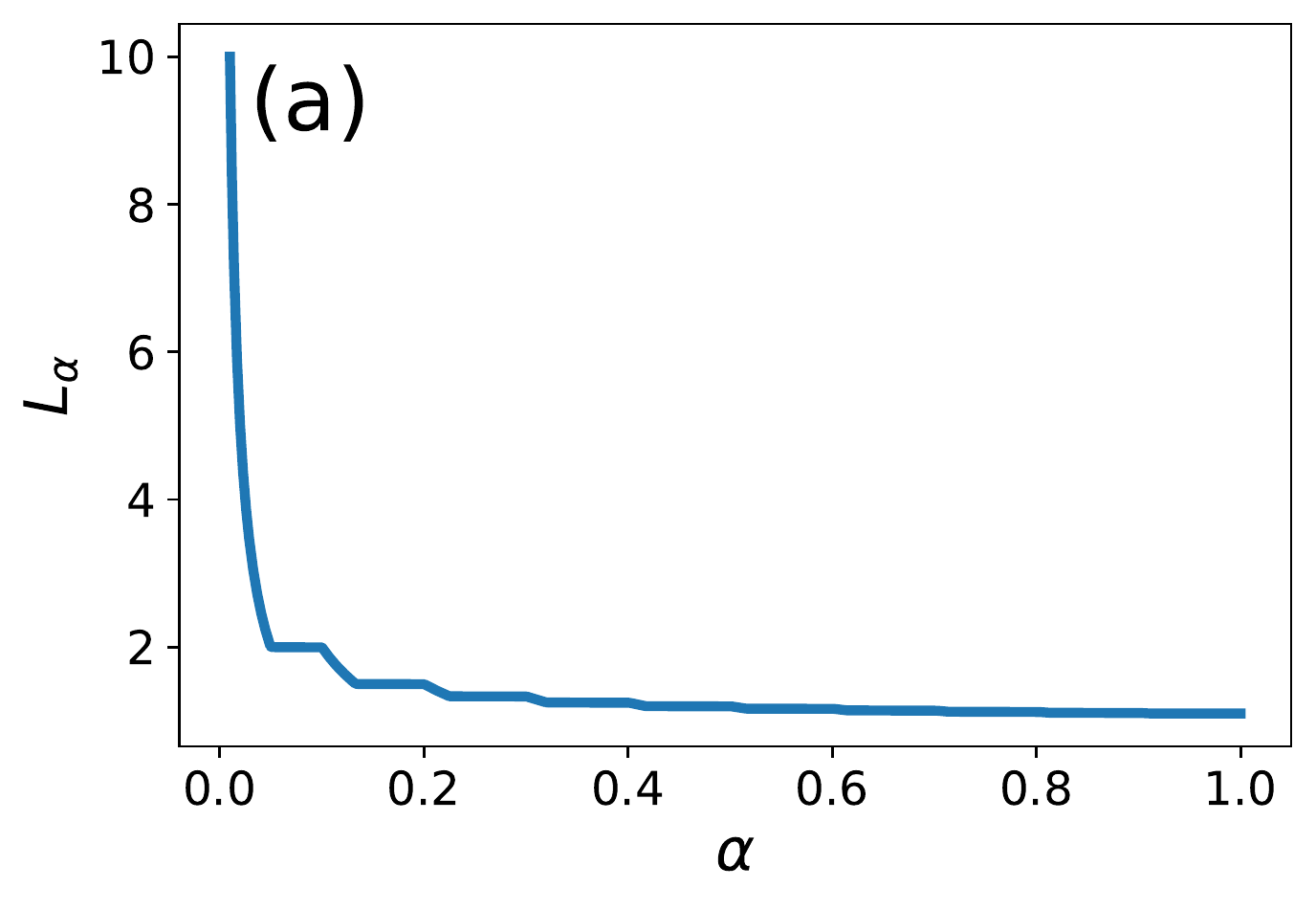}
\includegraphics[width=0.48\columnwidth]{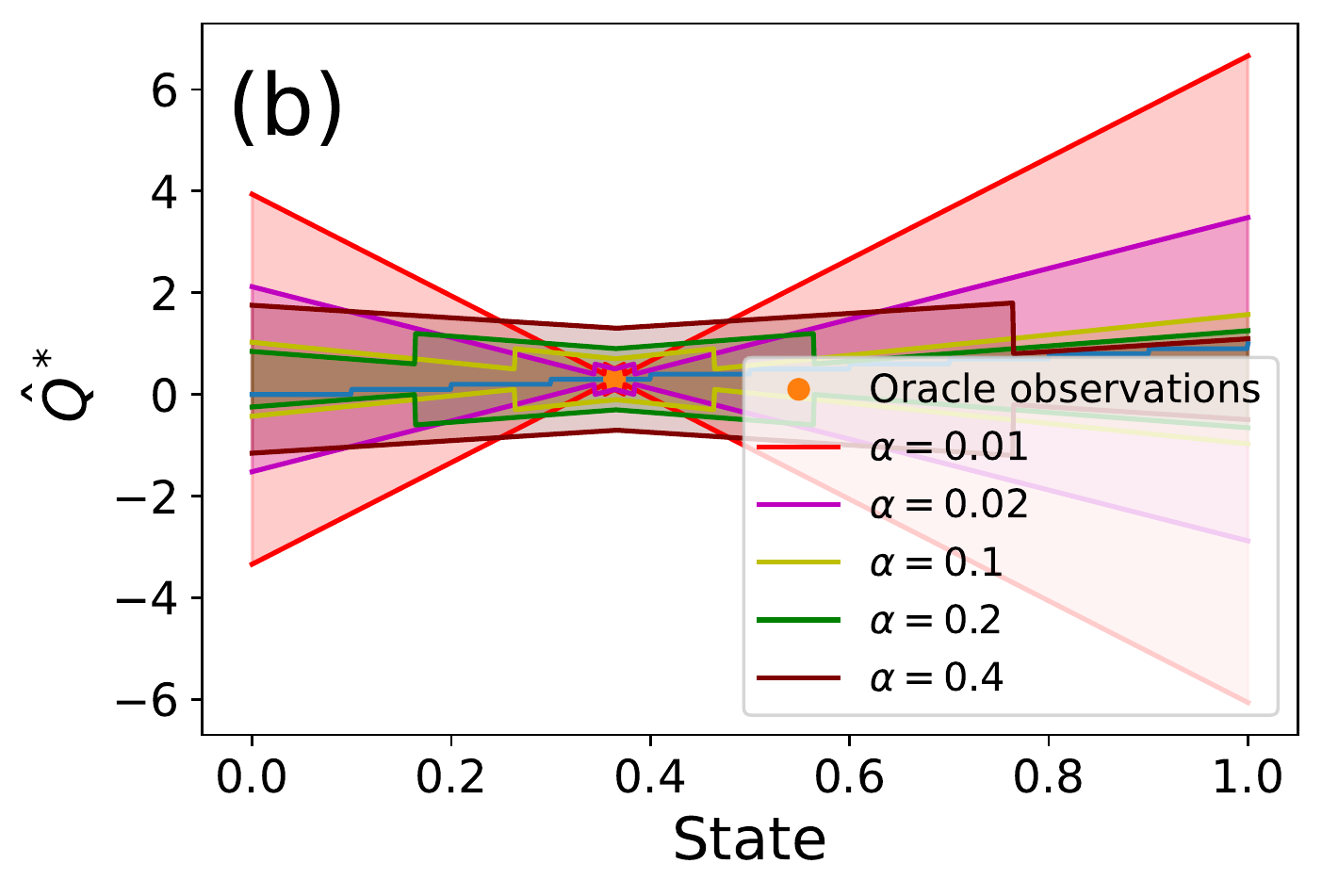}
\includegraphics[width=0.48\columnwidth]{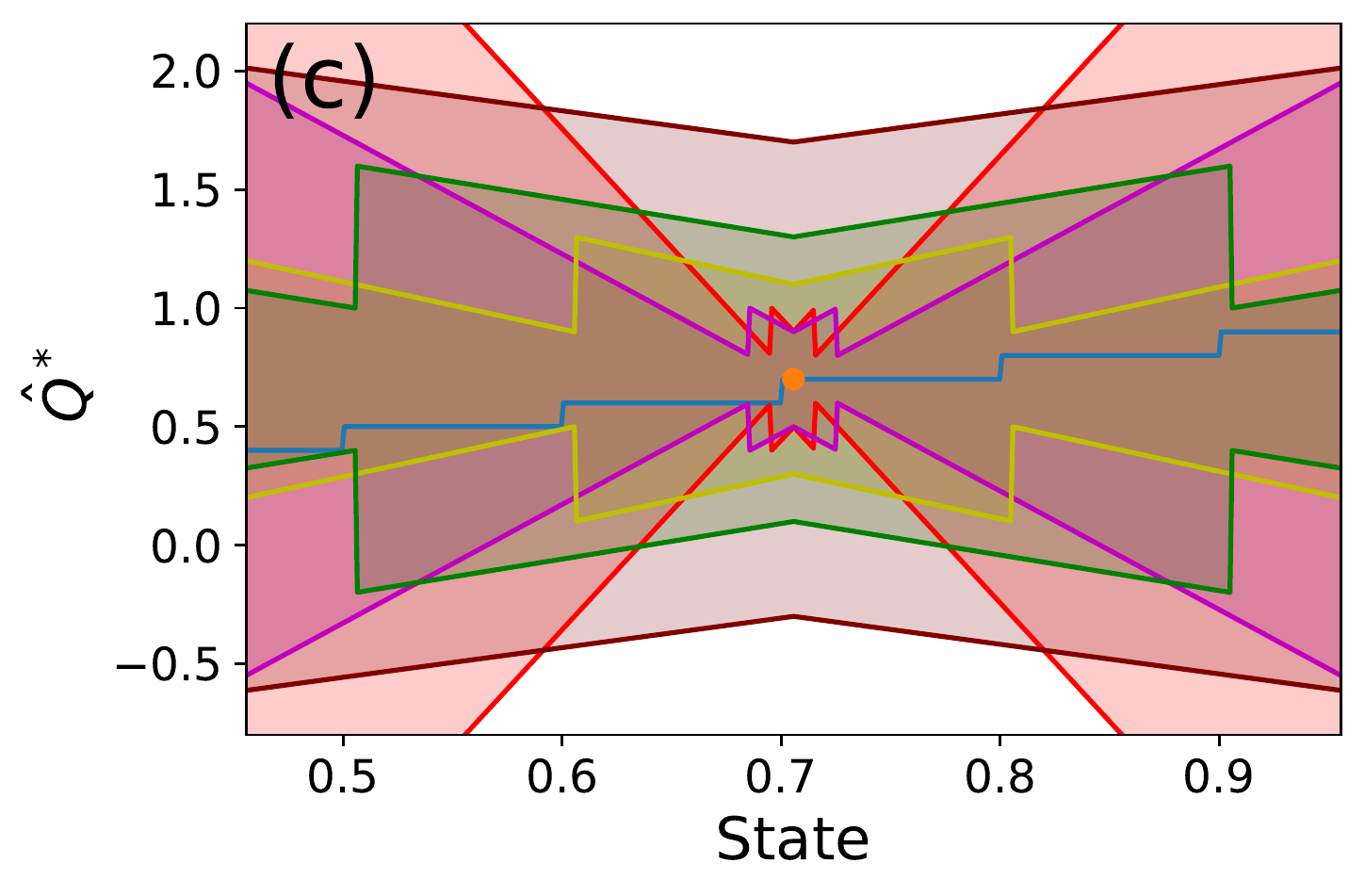}
\includegraphics[width=0.48\columnwidth]{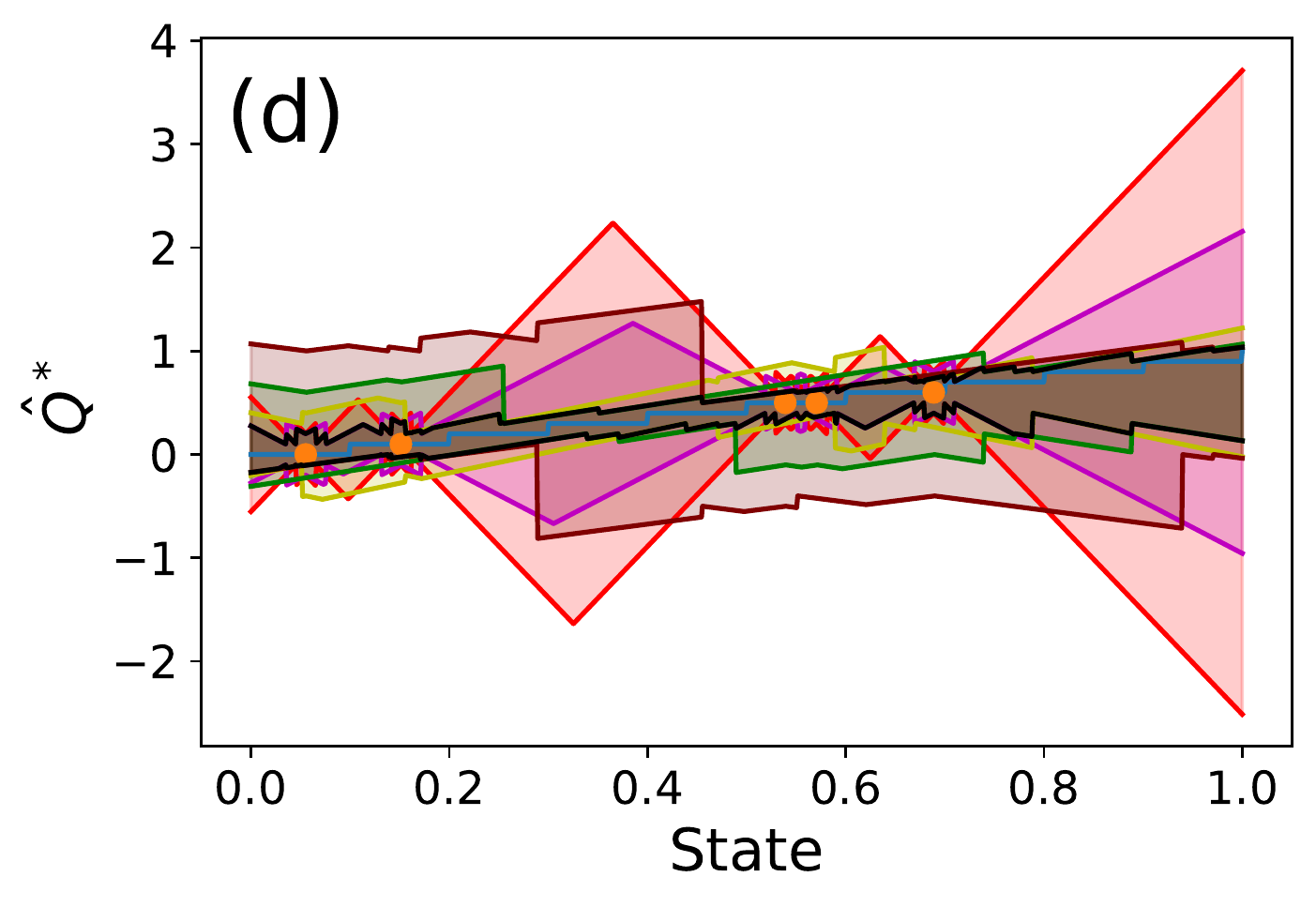}
\caption{(a) $L_{\alpha}$ vs.\ $\alpha$ for the steps function. By combining the $L_{\alpha}$ bounds for different values of $\alpha$ ((b) and zoomed in (c)) we can obtain a much tighter overall bound (d).}
\label{fig:lip_estimation_multiple_alpha}
\end{center}
\vskip -0.2in
\end{figure}

The trade-off between tight bounds close to observed points for $L_{\alpha}$ with small values of $\alpha$, and tight bounds away from observed points for large $\alpha$ values suggests that we can achieve the best of both worlds by combining bounds over different length scales. If we know $L_{\alpha}$ for multiple values of $\alpha$, we can compute the tightest possible upper-bound by taking the minimum of the upper-bounds computed using different $(\alpha, L_{\alpha})$ pairs, and follow a similar procedure for computing a lower bound, giving:
\begin{align}
\label{eq:lip_all_alpha_est}
    \hat{f}_{UB}^{\alpha^*}(x) = \min_{\alpha} \min_{x'} f(x') + g_x^{\alpha}(x') \\ \nonumber
    \hat{f}_{LB}^{\alpha^*}(x) = \max_{\alpha} \max_{x'} f(x') - g_x^{\alpha}(x').
\end{align}
In Fig.~\ref{fig:lip_estimation_multiple_alpha}, we illustrate how such a procedure would work for bounding a step function $f(x)$ for which we have oracular knowledge of $f(x)$ for sampled points. In Fig.~\ref{fig:lip_estimation_multiple_alpha} (a), we plot $L_{\alpha}$ vs.\ $\alpha$,\footnote{In Appendix~\ref{appendix:details_of_functions}, we show how $L_{\alpha}$ can be analytically computed for the step function.} and compare the bounds generated using the value of a single point for five $(\alpha, L_{\alpha})$ pairs in Fig.~\ref{fig:lip_estimation_multiple_alpha} (b-c). In Fig.~\ref{fig:lip_estimation_multiple_alpha} (d), we show in black the resulting bounds when all five  $(\alpha, L_{\alpha})$ pairs are used. Note that, if $f(x)$ is Lipschitz continuous and known, $L$ can also be used for this method.

The major hurdle to using this method is that it requires knowing $L_{\alpha}$ for multiple values of $\alpha$ (in an ideal world for \emph{all} values of $\alpha$), which assumes a lot of domain knowledge. Additionally, even if many $(\alpha, L_{\alpha})$ pairs are known, the computation of the bound could be expensive (especially in the ideal case of knowing $L_{\alpha}$ for all continuous values of $\alpha$). Thus, while Eq.~\ref{eq:lip_all_alpha_est} could provide the tightest bounds possible, the next section provides a practical way to obtain tight bounds in the vicinity of observed points, while retaining the ability to use $L_{\alpha}$ for overall tight bounds, at the expense of allowing bounded violations of the bounds.

\subsubsection{Relaxing the Strictness of the Bounds}
\label{sec:relax_strictness}

An alternative approach that allows us to obtain approximate bounds that become arbitrarily tight in the limit of infinite data is by replacing $L$ in Eq.~\ref{eq:lip_est} with $L_{\alpha}$. This method will retain the advantage of having tight bounds far away from points at which $f(x)$ is known, with the caveat that there might be violations of the bounds in the vicinity of such points. However, by comparison with Eq.~\ref{eq:lip_alpha_est_g}, we see that such violations may be at most $2 \alpha L_{\alpha}$ (i.e., $f(x) - \hat{f}_{UB} < 2 \alpha L_{\alpha}$).

In practice, we have found that this method still leads to good results when used in RL. This property was demonstrated in the experiment presented in Sec.~\ref{sec:intuition}---the results presented are obtained using the algorithm developed in \citet{ni2019learning}, but with different replacement values for $L$. Despite violations of the bounds, the algorithm obtains its best results with an intermediate replacement value for $L$.

\section{Implications for RL}

We now discuss the implications of using $L_{\alpha}$ bounds instead of Lipschitz bounds in RL. We first show in Sec.~\ref{sec:l_alpha_bound} that the structure of MDPs allows us to upper bound $L_{\alpha}$ for one meaningful value of $\alpha$, and thus the ability to use it for RL is guaranteed for any domain over which a suitable distance metric is defined. This fact stands in contrast to the Lipschitz constant, which is not always known and is hard to estimate. In Sec.~\ref{sec:implications_to_existing_works}, we discuss how $L_{\alpha}$ bounds can be incorporated into existing Lipschitz-based algorithms, and what type of theoretical results can be expected to carry over when switching between the bounds.

\subsection{Upper Bounding $L_{\alpha}$}
\label{sec:l_alpha_bound}

A crucial concern when applying Lipschitz (or $L_{\alpha}$) bounds is knowing the specific value of $L$ (or $L_{\alpha}$). For Lipschitz bounds, assuming $L$ is known, as many recent papers do, is a very strong assumption. Estimating the Lipschitz constant is very hard, and, because very steep gradients could potentially exist on very small length scales in $\mathcal{X}$,
it is not possible to derive an upper bound from samples. Such a
bound is required for many optimality results to hold. Fortunately, upper bounds \emph{can} be estimated for the $L_\alpha$ constant: See Prop.~\ref{thm:max_lip_alpha} for an upper bound on $L_{\alpha}$ for $\alpha = d_{\min}$ (proof in Appendix ~\ref{appendix:proof_max_lip_alpha}). This property not only provides an off-the-shelf
value for use in algorithms, it does so for a relevant length scale of the MDP. Consider for simplicity the case of $k=1$, so for all points $x, x' \in \mathcal{X}$ such that $d(x, x') \leq d_{\min}$ the agent can transition between the points in one step. The motivation for defining $L_{\alpha}$ was to have a smoothness measure that is insensitive to fluctuations at small lengthscales of $\mathcal{X}$, and therefore choosing $\alpha = d_{\min}$ smooths over changes in $Q$ on the scale of single transitions, which, as shown in Sec.~\ref{sec:intuition} and~\ref{sec:q_func_structure}, is an important source of discontinuous jumps of the Q-function.

\begin{proposition}
\label{thm:max_lip_alpha}
For an MDP satisfying Assumption~\ref{ass:metric_structure_1}, and convex $\mathcal{X}$, for $\alpha = d_{\min}$, and $\Delta Q^*_{\max}$ defined in Corollary~\ref{thm:max_jump_in_mdp_global}, we can bound $L_{\alpha}$ by $L_{\alpha} \leq \frac{2 \Delta Q^*_{\max}}{d_{\min}}$.
\end{proposition}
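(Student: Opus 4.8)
The plan is to reduce the supremum defining $L_{\alpha}$ to a telescoping chain of single-step comparisons, each of which is controlled by Corollary~\ref{thm:max_jump_in_mdp_global}. Fix any $x, x' \in \mathcal{X}$ with $d(x,x') \geq \alpha = d_{\min}$. It suffices to show $|Q^*(x) - Q^*(x')| \leq \frac{2 \Delta Q^*_{\max}}{d_{\min}}\, d(x,x')$, since dividing by $d(x,x')$ and taking the supremum over all such pairs then yields $L_{\alpha} \leq \frac{2\Delta Q^*_{\max}}{d_{\min}}$.

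First I would discretize the segment joining the two points. Because $\mathcal{X}$ is convex, $x$ and $x'$ are connected by a straight segment inside $\mathcal{X}$; set $n = \lceil d(x,x')/d_{\min} \rceil$ and place equally spaced points $x = y_0, y_1, \dots, y_n = x'$ along it, so that $d(y_i, y_{i+1}) = d(x,x')/n \leq d_{\min}$ for every $i$. Next I would apply the jump bound: since the metric on $\mathcal{X}$ is built as $d_{\mathcal{X}}(x,x') = d_{\mathcal{S}}(s,s') + C\,d_{\mathcal{A}}(a,a')$, the projection onto $\mathcal{S}$ is nonexpansive, so $d_{\mathcal{S}}(s_i, s_{i+1}) \leq d_{\mathcal{X}}(y_i, y_{i+1}) \leq d_{\min}$ and Corollary~\ref{thm:max_jump_in_mdp_global} applies to each consecutive pair, giving $|Q^*(y_i) - Q^*(y_{i+1})| \leq \Delta Q^*_{\max}$. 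Summing these $n$ inequalities with the triangle inequality yields $|Q^*(x) - Q^*(x')| \leq n\, \Delta Q^*_{\max}$.

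Finally I would bound the number of pieces. Writing $t = d(x,x')/d_{\min} \geq 1$, we have $n = \lceil t \rceil \leq t + 1 \leq 2t$, so $n\,\Delta Q^*_{\max} \leq \frac{2\, d(x,x')}{d_{\min}}\,\Delta Q^*_{\max}$, which is exactly the desired inequality, and the proof is complete.

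I expect the only genuine subtlety to lie in the discretization step: one must ensure that the intermediate points $y_i$, which are close in the $\mathcal{X}$-metric, also have state components close in the $\mathcal{S}$-metric, since Corollary~\ref{thm:max_jump_in_mdp_global} is stated with a hypothesis on $d_{\mathcal{S}}$; this is where both convexity of $\mathcal{X}$ and the additive structure of the $\mathcal{X}$-metric are used (more generally one only needs the $\mathcal{S}$-projection to be $1$-Lipschitz and $\mathcal{X}$ to admit a segment along which $d_{\mathcal{S}}$ behaves sub-additively). Everything else is the triangle inequality together with the elementary estimate $\lceil t \rceil \leq 2t$ for $t \geq 1$, which is also precisely where the constant $2$ enters the bound; note the worst case is $t \to 1^+$, so a sharper treatment of the ceiling would not improve the constant.
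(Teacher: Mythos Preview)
Your proof is correct and follows essentially the same route as the paper: discretize the segment in $\mathcal{X}$ into $\lceil d(x,x')/d_{\min}\rceil$ pieces, apply Corollary~\ref{thm:max_jump_in_mdp_global} to each, telescope, and then use that $\lceil t\rceil/t \leq 2$ for $t\geq 1$ (the paper phrases this last step as locating the supremum of $\lceil d/d_{\min}\rceil \cdot d^{-1}$ at $d\to d_{\min}^+$, which is the same computation). Your explicit remark that the hypothesis of Corollary~\ref{thm:max_jump_in_mdp_global} is on $d_{\mathcal{S}}$ rather than $d_{\mathcal{X}}$, and hence requires the $\mathcal{S}$-projection to be $1$-Lipschitz, is a point the paper's own proof passes over silently.
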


While the bound presented in Proposition \ref{thm:max_lip_alpha} may be conservative, it is a marked improvement over the potentially infinite Lipschitz constant value, which cannot be bounded empirically without additional assumptions.



\subsection{Extending $L_{\alpha}$ Bounds to Existing Lipschitz Algorithms and Results}
\label{sec:implications_to_existing_works}

Most Lipschitz-based algorithms use the Lipschitz assumption to generate bounds on the Q-function; by substituting the $L_{\alpha}$ bounds for the Lipschitz bounds, the same algorithms can be used as-is. How would the theoretical guarantees of these algorithms compare with the original algorithms? This discussion can be divided into several cases. First, we have demonstrated that, for a  wide range of common RL tasks, the Lipschitz smoothness assumption on the Q-function is violated, making the argument moot. By contrast, for any bounded domain and a bounded Q-function, $L_{\alpha}$ will always exist for all $\alpha > 0$. Second, even if the Q-function is Lipschitz continuous, $L$ is usually  unknown and hard to estimate or even upper bound. Meanwhile, Prop.~\ref{thm:max_lip_alpha} provides an upper bound for $L_{\alpha}$ for at least one value of $\alpha$ that can readily be used in algorithms. Furthermore, if $L$ is known, any algorithm that relies on using it to upper bound the Q-function can be modified to also incorporate $L_{\alpha}$ for $\alpha=d_{\min}$ using Eq.~\ref{eq:lip_all_alpha_est} at an additional computational factor of at most 2.

Finally, although we can always use both $L_{\alpha}$ and $L$ simultaneously (if $L$ exists and is known), we can compare their performance individually. Early in training, when the agent has observed relatively few points, by Theorem~\ref{thm:compare_est_methods}, $L_{\alpha}$ provides tighter bounds than $L$, and can therefore explore more efficiently by ignoring large regions of the state--action space whose values can be bounded below optimality. However, later in training, when the domain has been thoroughly explored, $L_{\alpha}$ bounds are looser in the vicinity of observed points and are less suitable to fine tuning policies already close to optimality. As a consequence, many of the theoretical results in the literature can be extended to using $L_{\alpha}$ bounds, but care must taken to keep track of when the agent has observed enough of the space that the $L$ bounds become tighter and should be used instead.

As an example, Prop.~\ref{thm:zooming_prop} demonstrates how $L_{\alpha}$ bounds can be applied to the theoretical results of the recently proposed ZoomingRL algorithm~\citep{touati2020zooming} (proof in Appendix~\ref{appendix:proof_zooming}).

\begin{proposition}
\label{thm:zooming_prop}
In the ZoomRL algorithm introduced in \citet{touati2020zooming}, for any value of $\alpha$ such that $L_{\alpha} \leq L/2$, substituting $L_{\alpha}$-bounds for $L$ results in improved regret for at least the first $\frac{2}{3 \alpha}$ episodes.
\end{proposition}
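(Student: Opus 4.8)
\textit{Proof idea.} The plan is to show that, over a short initial horizon, the modified algorithm is \emph{indistinguishable} from the original ZoomRL run with the Lipschitz constant $L$ replaced by the smaller constant $L_\alpha$, so that its regret bound simply inherits the $L \mapsto L_\alpha$ scaling with no additional penalty. First I would recall the two structural facts underlying the analysis of \citet{touati2020zooming}: (i) under the clean event, the cumulative regret decomposes into a sum over episodes of the ``discretization bias'' of the ball selected in that episode, which is proportional to $L\cdot\rad(B)$, plus a statistical term in which $L$ does not appear; and (ii) a ball $B$ is refined (its radius halved) only after it has been selected a number of times that grows as $\rad(B)$ shrinks, because a split is triggered only once the statistical width of $B$ has dropped below its Lipschitz width $L\cdot\rad(B)$. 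Summing this refinement schedule from the initial radius (the diameter of $\mathcal{X}$, normalized to a constant) down to radius $\alpha$, and substituting $L_\alpha$ for $L$ in the splitting condition, the geometric sum together with the hypothesis $L_\alpha \le L/2$ yields that no ball of radius smaller than $\alpha$ can become active during the first $\frac{2}{3\alpha}$ episodes.

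The second step is to establish validity of the confidence bounds in this initial phase. By the derivation of Eqs.~\eqref{eq:lip_alpha_est}--\eqref{eq:lip_alpha_est_g}, replacing $L$ by $L_\alpha$ in the ball bonuses keeps every upper (resp.\ lower) confidence bound a genuine over- (resp.\ under-) estimate of $Q^*$ as long as the balls involved have radius at least $\alpha$; only for balls of radius below $\alpha$ does one incur a slack of at most $2L_\alpha\alpha$, which is exactly the additive mis-specification of the Lipschitz assumption that Proposition~\ref{thm:max_jump_given_l_alpha} quantifies and that \citet{touati2020zooming} already prove ZoomRL tolerates. Since step one shows that during the first $\frac{2}{3\alpha}$ episodes every active ball has radius $\ge \alpha$, the slack term is never triggered, and the clean-event and optimism arguments of \citet{touati2020zooming} carry over verbatim with $L$ replaced by $L_\alpha$. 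Hence, for each episode $k \le \frac{2}{3\alpha}$, the per-episode regret bound is the original one with $L\cdot\rad(B_k)$ replaced by $L_\alpha\cdot\rad(B_k) \le \tfrac12\,L\cdot\rad(B_k)$, with the statistical contributions unchanged; summing over $k \le \frac{2}{3\alpha}$ gives a strictly smaller cumulative regret bound than the $L$-based algorithm over the same horizon.

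The main obstacle is step one: ZoomRL's refinement schedule must be tracked carefully enough to pin down the constant $\frac{2}{3\alpha}$, and one has to verify that $L_\alpha \le L/2$ is precisely the condition needed to push the ``first ball of radius $<\alpha$'' event past that many episodes (intuitively, a smaller smoothness constant shrinks the bonus relative to the statistical width, delaying every split). A secondary and more routine point is to check that the robustness-to-mis-specification result of \citet{touati2020zooming} applies with a slack supported only on small balls rather than uniformly; but since that regime is never entered within the first $\frac{2}{3\alpha}$ episodes, this affects only the complementary statement and not the claimed improvement. $\qquad\square$
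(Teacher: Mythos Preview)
Your two-part structure (track the refinement schedule, then argue validity/tightness of the $L_\alpha$-based bonuses) matches the paper's proof, but both steps have concrete errors in how the constants enter.

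\textbf{The radius threshold is $3\alpha$, not $\alpha$.} You claim that replacing $L$ by $L_\alpha$ in the ball bonus $L\cdot\rad(B)$ remains a valid upper bound as long as $\rad(B)\ge\alpha$. This is not true. The $L_\alpha$ bound in Eqs.~\eqref{eq:lip_alpha_est}--\eqref{eq:lip_alpha_est_g} gives $|Q^*(x)-Q^*(x')|\le L_\alpha\,d(x,x')$ only for $d(x,x')\ge\alpha$; for points at distance $d<\alpha$ from the ball's center, the guaranteed bound is $L_\alpha(d+2\alpha)\le 3\alpha L_\alpha$. Since every ball contains points arbitrarily close to its center, the bonus $L_\alpha\cdot\rad(B)$ covers the worst case only when $L_\alpha\cdot\rad(B)\ge 3\alpha L_\alpha$, i.e.\ $\rad(B)\ge 3\alpha$. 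The same issue arises in the second term of the index, $\hat Q(B')+L\cdot\dist(B,B')$: when $\dist(B,B')<\alpha$, simply writing $L_\alpha\cdot\dist(B,B')$ is not valid, and the paper substitutes the slack $3\alpha L_\alpha$ there as well. So the refinement-schedule argument must stop at radius $3\alpha$, not $\alpha$, and it is this $3\alpha$ that produces the constant in $\tfrac{2}{3\alpha}$.

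\textbf{The hypothesis $L_\alpha\le L/2$ is used in the comparison step, not in the refinement schedule.} You argue that substituting $L_\alpha$ for $L$ in the splitting condition delays every split, and that this together with $L_\alpha\le L/2$ yields the $\tfrac{2}{3\alpha}$ horizon. But in ZoomRL the refinement rule is purely a visit count (a ball is split once it has been selected at least $1/\rad(B)$ times) and does not depend on $L$ at all. The paper's Part~I just sums the geometric series $1+2+\cdots+2^{n-1}$ of visits needed to reach radius $2^{-n}<3\alpha$, obtaining $K<2^n\le\tfrac{2}{3\alpha}$, with no appeal to $L_\alpha\le L/2$. That hypothesis is instead the key to Part~II: in the close-ball case $\dist(B,B')<\alpha$, showing that the $L_\alpha$-index (with the $3\alpha L_\alpha$ slack) is still no larger than the $L$-index reduces to $L_\alpha/L\le \rad(B)/(\rad(B)+3\alpha)$, and the right-hand side is $\ge\tfrac12$ precisely when $\rad(B)\ge 3\alpha$. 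Thus $L_\alpha\le L/2$ is what makes the $L_\alpha$-based index both \emph{valid} and \emph{tighter} once all radii are at least $3\alpha$; it has nothing to do with when balls split.
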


This result underscores a trade-off when using $L_{\alpha}$ for exploration. When $\alpha$ is small, we obtain better regret using $L_{\alpha}$ over more episodes, but the regret gap is small. Conversely, with a large value of $\alpha$, we can obtain larger regret improvement, but that improvement exists for fewer episodes.

Note that quite often $L_\alpha$ is much smaller than $L$ even for relatively small values of $\alpha$ (see, for example, Fig. \ref{fig:lip_estimation_multiple_alpha}(a)), and therefore in practice it would often be easy to find an $\alpha$ such that $L_{\alpha} \leq L/2$.

\section{Conclusion}

Using the smoothness properties of the Q-function is a promising direction for performing RL in continuous state--action spaces. We have shown, however, that care must be taken when making assumptions about smoothness properties, as traditional notions of smoothness do not always hold for such domains. Our coarse-grained smoothness approach takes a first step towards bridging the gap between existing work on Lipschitz-based RL and the realistic Q-function geometry of many MDPs. We have demonstrated that many of the algorithms already presented in the literature can easily be modified to incorporate this new smoothness measure, and that useful theoretical properties of these algorithms can be carried over.

\bibliography{references}
\bibliographystyle{icml2020}

\clearpage

\onecolumn
\appendix
\begin{center}
{\LARGE  \bf Coarse-Grained Smoothness for RL in Metric Spaces - Appendix 
}
\end{center}

\appendix

\section{Proof of Theorem \ref{thm:max_jump_in_mdp}}
\label{appendix:proof_max_jump_in_mdp}

\paragraph{Theorem \ref{thm:max_jump_in_mdp}}

Under Assumption~\ref{ass:metric_structure_1},
for any two points $x_1, x_2 \in \mathcal{X}$ such that $d(s_1, s_2) \leq d_{\min}$, the difference $\Delta Q^* \equiv |Q^*(x_1) - Q^*(x_2)|$ is bounded by
\begin{align}
    \Delta Q^* \leq \frac{1 - \gamma^k}{1 - \delta \gamma^k} \left( Q^*_{{\max},(1,2)} - \frac{r_{\min}}{1 - \gamma} \right),
\end{align}
where $Q^*_{{\max},(1,2)} \equiv \max(Q^*(x_1), Q^*(x_2))$.

\begin{proof}
Without loss of generality, assume $Q^*(x_1) > Q^*(x_2)$ and therefore $Q^*_{{\max},(1,2)} = Q^*(x_1)$. To lower bound $\Delta Q^*$, we need to find the lowest possible value of $Q^*(x_2)$. Because $d(s_1, s_2) \leq d_{\min}$, the agent can, with probability $1-\delta$, reach $s_1$ from $s_2$ in at most $k$ steps (Assumption~\ref{ass:metric_structure_1}), and from $s_1$ can follow the optimal policy.

Let $n_k$ be the number of $k$-step attempts needed to reach $s_1$. In the worst case, the agent will collect a (potentially negative) discounted reward of $\frac{1 -  \gamma^{k n_k}}{1-\gamma}r_{\min}$ along the way. Therefore, for a particular $n_k$, the discounted expected reward from $x_2$ is at least $Q_{n_k}^*(x_2) = \frac{1-\gamma^{k n_k}}{1-\gamma}r_{\min} + \gamma^{k n_k} Q^*(x_1)$. Part (ii) of Assumption~\ref{ass:metric_structure_1} is used to make sure that even if the agent does not reach $s_1$ in a particular $k$-step attempt it remains within $d_{\min}$ of it, and can therefore try again until it does (in Appendix~\ref{appendix:assumption_modification} we relax this assumption). The probability of the agent requiring exactly $n_k$ attempts is $\delta^{n_k -1}(1-\delta)$, and therefore the value of $Q^*(x_2)$ is at least the expectation over $n_k$ of $Q_{n_k}^*(x_2)$:

\begin{align}
\label{eq:q2_min}
Q^*(x_2) &\geq \sum_{n_k=1}^{\infty} \delta^{n_k -1}(1-\delta) Q_{n_k}^*(x_2) \\ \nonumber
&= \sum_{n_k=1}^{\infty} \delta^{n_k -1}(1-\delta) \left( \frac{1-\gamma^{k n_k}}{1-\gamma}r_{\min} + \gamma^{k n_k} Q^*(x_1) \right) \\ \nonumber
&= \frac{(1-\delta) r_{\min}}{1-\gamma}\sum_{n_k=1}^{\infty} \delta^{n_k -1} 
-\frac{(1-\delta) r_{\min}}{1-\gamma} \sum_{n_k=1}^{\infty} \delta^{n_k -1} \gamma^{k n_k} 
+ (1 - \delta) Q^*(x_1) \sum_{n_k=1}^{\infty} \delta^{n_k -1} \gamma^{k n_k} \\ \nonumber
&= \frac{(1-\delta) r_{\min}}{(1-\gamma)(1-\delta)} - \frac{\gamma^k (1-\delta) r_{\min}}{(1-\gamma)(1-\delta \gamma^k)} 
+ \frac{\gamma^k (1 - \delta) Q^*(x_1)}{(1-\delta \gamma^k)} \\ \nonumber
&= \frac{(1-\delta) r_{\min}}{(1-\gamma)} \left( \frac{1}{1-\delta} - \frac{\gamma^k}{1 - \delta \gamma^k} \right) 
+ \frac{\gamma^k (1 - \delta) Q^*(x_1)}{(1-\delta \gamma^k)} \\ \nonumber
&= \frac{(1-\delta) r_{\min}}{(1-\gamma)} \frac{(1- \delta \gamma^k - \gamma^k + \delta \gamma^k)}{(1-\delta)(1 - \delta \gamma^k)} 
+ \frac{\gamma^k (1 - \delta) Q^*(x_1)}{(1-\delta \gamma^k)} \\ \nonumber
&= \frac{r_{\min}(1 - \gamma^k)}{(1-\gamma)(1-\delta \gamma^k)} + \frac{\gamma^k (1 - \delta) Q^*(x_1)}{(1-\delta \gamma^k)}
\end{align}

Now to obtain the largest difference between the values of the states we simply subtract the result of Eq.~\ref{eq:q2_min} from $Q^*(x_1)$ to obtain:

\begin{align}
Q^*(x_1) - Q^*(x_2) &\leq Q^*(x_1) - \frac{r_{\min}(1 - \gamma^k)}{(1-\gamma)(1-\delta \gamma^k)} 
- \frac{\gamma^k (1 - \delta) Q^*(x_1)}{(1-\delta \gamma^k)} \\ \nonumber
&= Q^*(x_1) \frac{(1 - \delta \gamma^k - \gamma^k + \delta \gamma^k)}{1 - \delta \gamma^k} 
- \frac{r_{\min}(1 - \gamma^k)}{(1-\gamma)(1-\delta \gamma^k)} \\ \nonumber
&= \frac{1 - \gamma^k}{1 - \delta \gamma^k} \left( Q^*(x_1) - \frac{r_{\min}}{1-\gamma} \right) \\ \nonumber
&= \frac{1 - \gamma^k}{1 - \delta \gamma^k} \left( Q^*_{\max, (1,2)} - \frac{r_{\min}}{1-\gamma} \right),
\end{align}

where the last substitution is due to $Q^*(x_1) > Q^*(x_2)$.

\end{proof}

\section{Modifications of Assumption \ref{ass:metric_structure_1}}
\label{appendix:assumption_modification}

Below we present two variations to Assumption~\ref{ass:metric_structure_1} which may relax it further and allows application to a wider range of problems. We first discuss a variation of the Assumption which allows for applications to continuous stochastic domains (Appendix~\ref{appendix:assumption_modification_stochastic_continuous}). We then present a variation which may alleviate the restrictiveness of part (ii) of Assumption~\ref{ass:metric_structure_1}, that with probability 1 the agent stays within $d_{\min}$ of the state it is trying to reach (Appendix~\ref{appendix:assumption_modification_possibility_of_getting_far_from_goal}).

\subsection{Extension to stochastic continuous domains}
\label{appendix:assumption_modification_stochastic_continuous}

Assumption~\ref{ass:metric_structure_1} is sufficient to derive a bound on the difference of the $Q^*$ function between two states within a given distance from each other for deterministic continuous domains or stochastic discrete domains when a suitable metric is provided (or stochastic continuous domains under discretization). However, Assumption\ref{ass:metric_structure_1} is not suitable for fully stochastic continuous domains. To see that consider an environment in which the action of the agent determines the next state of the agent up to some Gaussian noise, i.e. $s_{t+1} = s_t + a_t + \varepsilon_t$, with $\varepsilon_t \sim \mathcal{N}(0, \sigma^2)$. In this case the probability of reaching \emph{any} specific state is zero, as the transition function is now a probability density rather than a probability mass function.

To alleviate this issue we provide the following alternative to Assumption~\ref{ass:metric_structure_1}:

\begin{assumption}
\label{ass:metric_structure_1_stochastic_continuous}
For any two states $s, s' \in \mathcal{S}$ such that $d(s, s') \leq d_{\min}$, we have that (i)  with probability $1-\delta$, a ball with center $s'$ and radius $d_{\varepsilon}$ can be reached from $s$ in at most $k$ time-steps, and (ii) with probability 1, the agent can remain within a distance of less than $d_{\min}$ from $s'$.
\end{assumption}

While the assumption above can be applied to stochastic continuous domains, it is not enough to prove an analogous result to Theorem~\ref{thm:max_jump_in_mdp}, because there is no way to relate the value at a state $s'$, and the values at states in a ball of radius $d_{\varepsilon}$ around it. For that we must make an additional assumption:

\begin{assumption}
\label{ass:metric_structure_local_q_gap}
There exists a real number $\Delta Q^*_{d_{\varepsilon}} > 0$, such that for any two point $x_1, s_2 \in \mathcal{X}$ satisfying $d(s_1, s_2) \leq d_{\varepsilon}$, we have that $\Delta Q^*_{d_{\varepsilon}} \leq |Q^*(x_1) - Q^*(x_2)|$.
\end{assumption}

We will shortly discuss the limitations of Assumption~\ref{ass:metric_structure_local_q_gap}, but we first note that Assumptions~\ref{ass:metric_structure_local_q_gap} and \ref{ass:metric_structure_1_stochastic_continuous} lead to the following alternative to Theorem~\ref{thm:max_jump_in_mdp}:

\begin{theorem}
\label{thm:max_jump_in_mdp_stochastic_continuous}
Under Assumptions~\ref{ass:metric_structure_1_stochastic_continuous} and~\ref{ass:metric_structure_local_q_gap},
for any two points $x_1, x_2 \in \mathcal{X}$ such that $d(s_1, s_2) \leq d_{\min}$, the difference $\Delta Q^* \equiv |Q^*(x_1) - Q^*(x_2)|$ is bounded by
\begin{align}
    \Delta Q^* \leq \frac{1 - \gamma^k}{1 - \delta \gamma^k} \left( Q^*_{{\max},(1,2)} - \frac{r_{\min}}{1 - \gamma} \right) + \frac{\gamma^k (1 - \delta)}{1 - \delta \gamma^k} \Delta Q^*_{d_{\varepsilon}},
\end{align}
where $Q^*_{{\max},(1,2)} \equiv \max(Q^*(x_1), Q^*(x_2))$.
\end{theorem}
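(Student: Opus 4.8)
The plan is to mirror the proof of Theorem~\ref{thm:max_jump_in_mdp} almost verbatim, inserting a single extra error term that accounts for the agent landing \emph{near} $s_1$ rather than exactly at $s_1$. Without loss of generality assume $Q^*(x_1) \ge Q^*(x_2)$, so $Q^*_{\max,(1,2)} = Q^*(x_1)$ and it suffices to lower bound $Q^*(x_2)$. Exactly as in Appendix~\ref{appendix:proof_max_jump_in_mdp}, I would exhibit a (sub-optimal) behavior from $s_2$: since $d(s_1,s_2)\le d_{\min}$, part (i) of Assumption~\ref{ass:metric_structure_1_stochastic_continuous} says that in at most $k$ steps the agent reaches, with probability at least $1-\delta$, some state $\tilde s$ in the ball of radius $d_\varepsilon$ around $s_1$; part (ii) guarantees that after a failed attempt the agent can remain within $d_{\min}$ of $s_1$, so it may keep retrying. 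Letting $n_k$ be the number of $k$-step attempts until the first success (geometric, with $\Pr[n_k]=\delta^{\,n_k-1}(1-\delta)$), the worst-case discounted reward accumulated over the at most $k n_k$ navigation steps is $\frac{1-\gamma^{k n_k}}{1-\gamma} r_{\min}$, after which the agent follows $\pi^*$.

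The only new ingredient is the value of that continuation. Reading Assumption~\ref{ass:metric_structure_local_q_gap} as the bound $|Q^*(x)-Q^*(x')|\le \Delta Q^*_{d_\varepsilon}$ whenever the state components lie within $d_\varepsilon$ (the form the theorem requires), I would argue $V^*(\tilde s)\ge Q^*(x_1)-\Delta Q^*_{d_\varepsilon}$ by routing through $V^*$ and the optimal action at $s_1$: with $a_1^*=\arg\max_a Q^*(s_1,a)$, $V^*(\tilde s)\ge Q^*(\tilde s,a_1^*)\ge Q^*(s_1,a_1^*)-\Delta Q^*_{d_\varepsilon}=V^*(s_1)-\Delta Q^*_{d_\varepsilon}\ge Q^*(x_1)-\Delta Q^*_{d_\varepsilon}$. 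Conditioned on succeeding after exactly $n_k$ attempts the return is therefore at least $\frac{1-\gamma^{k n_k}}{1-\gamma}r_{\min}+\gamma^{k n_k}\bigl(Q^*(x_1)-\Delta Q^*_{d_\varepsilon}\bigr)$, so taking the expectation over $n_k$,
\begin{align}
Q^*(x_2)\ \ge\ \sum_{n_k=1}^{\infty}\delta^{\,n_k-1}(1-\delta)\left(\frac{1-\gamma^{k n_k}}{1-\gamma}r_{\min}+\gamma^{k n_k}\bigl(Q^*(x_1)-\Delta Q^*_{d_\varepsilon}\bigr)\right).
\end{align}
This is precisely the series evaluated in Appendix~\ref{appendix:proof_max_jump_in_mdp} with $Q^*(x_1)$ replaced by $Q^*(x_1)-\Delta Q^*_{d_\varepsilon}$, so I would reuse that geometric-sum computation verbatim to obtain $Q^*(x_2)\ge \frac{r_{\min}(1-\gamma^k)}{(1-\gamma)(1-\delta\gamma^k)}+\frac{\gamma^k(1-\delta)}{1-\delta\gamma^k}\bigl(Q^*(x_1)-\Delta Q^*_{d_\varepsilon}\bigr)$. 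Subtracting from $Q^*(x_1)$ and collecting terms (the part not involving $\Delta Q^*_{d_\varepsilon}$ collapses to $\frac{1-\gamma^k}{1-\delta\gamma^k}\bigl(Q^*(x_1)-\frac{r_{\min}}{1-\gamma}\bigr)$, exactly as in the deterministic case) leaves the single extra additive term $\frac{\gamma^k(1-\delta)}{1-\delta\gamma^k}\Delta Q^*_{d_\varepsilon}$, which is the claimed bound.

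The algebra is routine once Theorem~\ref{thm:max_jump_in_mdp} is in hand; the delicate points are conceptual rather than computational. The reason Assumption~\ref{ass:metric_structure_1_stochastic_continuous} must replace Assumption~\ref{ass:metric_structure_1} is measure-theoretic and worth stating explicitly: in a continuous stochastic domain ``reach $s_1$ exactly'' has probability zero, so the retry/geometric argument is only well posed once the target is a $d_\varepsilon$-ball of positive reach probability. The one genuinely careful step is the continuation bound $V^*(\tilde s)\ge Q^*(x_1)-\Delta Q^*_{d_\varepsilon}$: it has to be derived via $V^*$ and the optimal action \emph{at $s_1$} (not at $\tilde s$) so the inequality points the right way, and it depends on reading Assumption~\ref{ass:metric_structure_local_q_gap} as an upper bound on the local variation of $Q^*$. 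Everything else carries over from Appendix~\ref{appendix:proof_max_jump_in_mdp} unchanged.
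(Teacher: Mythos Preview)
Your proposal is correct and follows essentially the same approach as the paper's proof: the paper simply states that the argument is identical to that of Theorem~\ref{thm:max_jump_in_mdp} with the single modification that upon reaching the $d_\varepsilon$-ball around $s_1$ the continuation value is at least $Q^*(x_1)-\Delta Q^*_{d_\varepsilon}$ rather than $Q^*(x_1)$. Your version is more detailed (explicitly routing the continuation bound through $V^*$ and the optimal action at $s_1$, and correctly flagging that Assumption~\ref{ass:metric_structure_local_q_gap} must be read as an upper bound on the local $Q^*$-variation), but the underlying argument is the same.
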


\begin{proof}
The proof is similar to the proof for Theorem~\ref{thm:max_jump_in_mdp}, with the modification that once the agent reaches a ball with center $s_1$ and radius $d_{\varepsilon}$, its value is at least $Q^*(x_1) - \Delta Q^*_{d_{\varepsilon}}$ (rather than equaling $Q^*(x_1)$ when exactly reaching $s_1$).
\end{proof}

Assumption~\ref{ass:metric_structure_local_q_gap} may seem very limiting as it appears to make a local smoothness assumption on the $Q$-function which is similar to the types of assumptions we criticize in this paper. While this is partly true and unfortunately necessary to obtain results for stochastic continuous domains, note that Assumption~\ref{ass:metric_structure_local_q_gap} is still significantly weaker than a Lipschitz-smoothness assumption, as it still allows for discontinuous jumps of the $Q$-function, and makes no assumptions on points farther than $d_{\varepsilon}$ away. Generally $d_{\varepsilon}$ is assumed to be small (on the order of the transition noise).

A different way to phrase such critique of Assumption~\ref{ass:metric_structure_local_q_gap} is to note that it makes an assumption to bound $|Q^*(x_1) - Q^*(x_2)|$ for points within a given distance from each other $(d(s_1, s_2) < d_{\varepsilon})$, to obtain a bound for $|Q^*(x_1) - Q^*(x_2)|$ for points at a different distance from each other $(d(s_1, s_2) < d_{\min})$. This, in fact, leads to demonstrating the value of using Assumption~\ref{ass:metric_structure_local_q_gap} to obtain Theorem~\ref{thm:max_jump_in_mdp_stochastic_continuous}. Note that the same method of proof used for Lemma~\ref{thm:max_q_diff_any_points} in Appendix~\ref{appendix:proof_max_lip_alpha} can be used to get a more straight forward bound on $|Q^*(x_1) - Q^*(x_2)|$ when $d(s_1, s_2) < d_{\min}$, namely:

\begin{align}
\label{eq:naive_bound_on_q_gap}
    |Q^*(x_1) - Q^*(x_2)| \leq \Delta Q^*_{d_{\varepsilon}} \ceil*{\frac{d_{\min}}{d_{\varepsilon}}}.
\end{align}

In many practical settings, the length-scale over which actions affect the transition dynamics is much larger than the length-scale of the noise, i.e. $d_{\min} \gg d_{\varepsilon}$. Under such conditions, $\ceil*{\frac{d_{\min}}{d_{\varepsilon}}}$ will be very large and the bound in Theorem~\ref{thm:max_jump_in_mdp_stochastic_continuous} will be much tighter than the bound given by Eq.~\ref{eq:naive_bound_on_q_gap}. Thus, Theorem~\ref{thm:max_jump_in_mdp_stochastic_continuous} makes use of a local smoothness assumption on the length-scale of the noise to obtain a bound on the length-scale of the dynamics due to actions, which is significantly tighter than the bound which can be obtained without making use of Assumption~\ref{ass:metric_structure_1_stochastic_continuous}.

\subsection{Relaxing part (ii) of Assumption~\ref{ass:metric_structure_1}}
\label{appendix:assumption_modification_possibility_of_getting_far_from_goal}

We now provide an alternative to Assumption~\ref{ass:metric_structure_1} which replaces the requirement that with probability 1 the agent stays within $d_{\min}$ of the state it is trying to reach. The motivation for the original assumption is that when the agent is trying to reach a state $s'$, which it can reach within $k$ steps with high probability, if it fails, it should at the very least not get further away from the state. Here we discuss an alternative assumption, which allows for the possibility of the agent unfortunately ending up further away from the state it is trying to reach. We will assume that if the agent is unable to reach state $s'$ from state $s$ within $k$ steps, the state it will end up in, $s''$, satisfies $d(s, s'') \leq d_{\max}$. We constrain the value of $d_{\max}$ in Assumption~\ref{ass:metric_structure_1_possibility_of_getting_far_from_goal}, and in the proof of Theorem~\ref{thm:max_jump_in_mdp_possibility_of_getting_far_from_goal} justify this constraint by showing that if it is violated, the expected time for an agent to transition between two states that are less than $d_{\min}$ away from each other could be infinite. Thus, the constraint on $d_{\max}$ ensures our requirement of the metric to quantify how quickly an agent can transition between states.

\begin{assumption}
\label{ass:metric_structure_1_possibility_of_getting_far_from_goal}
For any two states $s, s' \in \mathcal{S}$ such that $d(s, s') \leq d_{\min}$, we have that (i)  with probability $1-\delta$, $s'$ can be reached from $s$ in at most $k$ time-steps, and (ii) if the agent does not reach $s'$ within $k$ time-steps, its state after $k$ steps, $s''$, satisfies $d(s, s'') \leq d_{\max} \equiv \frac{1 - \delta}{\delta} d_{\min}$.
\end{assumption}

Note that $\delta$ is assumed to be small, and therefore $d_{\max} > d_{min}$. Under Assumption~\ref{ass:metric_structure_1_possibility_of_getting_far_from_goal}, we can prove the following equivalent result to Theorem~\ref{thm:max_jump_in_mdp}:

\begin{theorem}
\label{thm:max_jump_in_mdp_possibility_of_getting_far_from_goal}
Under Assumption~\ref{ass:metric_structure_1_possibility_of_getting_far_from_goal}, if $r_{\min} \leq 0$,
for any two points $x_1, x_2 \in \mathcal{X}$ such that $d(s_1, s_2) \leq d_{\min}$, the difference $\Delta Q^* \equiv |Q^*(x_1) - Q^*(x_2)|$ is bounded by
\begin{align}
    \Delta Q^* \leq (1 - \gamma^{\tilde{k}}) \left( Q^*_{{\max},(1,2)} - \frac{r_{\min}}{1 - \gamma} \right)
\end{align}
where $Q^*_{{\max},(1,2)} \equiv \max(Q^*(x_1), Q^*(x_2))$ and $\tilde{k} \equiv \frac{k}{(1-\delta) - \ceil*{\frac{d_{\max}}{d_{\min}}} \delta} $.
\end{theorem}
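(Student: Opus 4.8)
The plan is to mirror the proof of Theorem~\ref{thm:max_jump_in_mdp}, with one crucial change: in that proof the number of $k$-step attempts needed to relocate was a geometric random variable, because part (ii) of the original assumption kept the agent within $d_{\min}$ of its target between attempts; here the agent can drift away, so the relocation time becomes a random walk whose expectation must be controlled by a drift/stopping-time argument. First I would reduce to the one-sided statement: assume without loss of generality that $Q^*(x_1) \ge Q^*(x_2)$, so that $Q^*_{\max,(1,2)} = Q^*(x_1)$, and it suffices to lower-bound $Q^*(x_2)$. I would then consider the (generally suboptimal) policy that, starting from $s_2$, repeatedly moves toward $s_1$: whenever the current state $s$ satisfies $d(s,s_1)\le d_{\min}$ it attempts to reach $s_1$ in at most $k$ steps, and otherwise it attempts to reach an intermediate state one $d_{\min}$-hop closer to $s_1$ (using the reachability interpretation of the metric / convexity of $\mathcal{X}$); upon first arriving at $s_1$ it follows $\pi^*$.

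Let $N$ be the (random) number of steps before the agent first reaches $s_1$. Bounding every reward collected before arrival from below by $r_{\min}$ and using $V^*(s_1) \ge Q^*(x_1)$, one gets, exactly as in the proof of Theorem~\ref{thm:max_jump_in_mdp},
\[
Q^*(x_2) \;\ge\; \mathrm{E}\!\left[\tfrac{1-\gamma^{N}}{1-\gamma}\,r_{\min} + \gamma^{N} Q^*(x_1)\right],
\]
so that $\Delta Q^* \le \big(1-\mathrm{E}[\gamma^{N}]\big)\big(Q^*_{\max,(1,2)} - \tfrac{r_{\min}}{1-\gamma}\big)$; the hypothesis $r_{\min}\le 0$ enters in this reward-accounting step, just as in that proof. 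Next I would replace $\mathrm{E}[\gamma^{N}]$ by $\gamma^{\mathrm{E}[N]}$ via Jensen's inequality (convexity of $t\mapsto\gamma^{t}$), reducing the theorem to the claim $\mathrm{E}[N]\le\tilde k$.

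To bound $\mathrm{E}[N]$ I would group the steps into $k$-step attempts and track $J_i$, the number of $d_{\min}$-sized hops separating the agent from $s_1$ at the start of attempt $i$, with $J_0=1$ since $d(s_1,s_2)\le d_{\min}$. In each attempt, with probability $1-\delta$ the agent reaches its target and $J_{i+1}=J_i-1$, while with probability $\delta$ part (ii) of Assumption~\ref{ass:metric_structure_1_possibility_of_getting_far_from_goal} places it within $d_{\max}$ of where that attempt started, so $J_{i+1}\le J_i + \ceil{d_{\max}/d_{\min}}$. Thus $(J_i)$ is a nonnegative integer-valued process with downward drift $(1-\delta)-\ceil{d_{\max}/d_{\min}}\,\delta$, and an optional-stopping / Lyapunov argument bounds the expected number of attempts until absorption at $J=0$ by $\big((1-\delta)-\ceil{d_{\max}/d_{\min}}\,\delta\big)^{-1}$; since each attempt is $k$ environment steps, $\mathrm{E}[N]\le\tilde k$. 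Substituting $d_{\max}=\tfrac{1-\delta}{\delta}d_{\min}$ yields exactly the stated $\tilde k$, and since $t\mapsto 1-\gamma^{t}$ is increasing, $\Delta Q^* \le (1-\gamma^{\tilde k})\big(Q^*_{\max,(1,2)} - \tfrac{r_{\min}}{1-\gamma}\big)$. I would also observe, as the promised justification for the constraint on $d_{\max}$, that if $d_{\max}$ exceeded $\tfrac{1-\delta}{\delta}d_{\min}$ the drift would become nonnegative and the walk null-recurrent or transient, so the expected relocation time $\mathrm{E}[N]$ would be infinite.

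The main obstacle is the bound $\mathrm{E}[N]\le\tilde k$. Unlike in Theorem~\ref{thm:max_jump_in_mdp}, the relocation time is no longer a clean geometric variable, so one must argue via the drift of the hop-count random walk and be careful that (a) intermediate states one hop closer to $s_1$ actually exist, which is where convexity of $\mathcal{X}$ (or an analogous path property of the metric) is needed; (b) the metric displacement bound $d_{\max}$ on a failed attempt is correctly converted into the increase $\ceil{d_{\max}/d_{\min}}$ of $J_i$; and (c) the optional-stopping argument applies, which requires the drift to be strictly negative — precisely why the assumption calibrates $d_{\max}$ as it does. Obtaining the constant in the form $\tilde k$, rather than a looser expression, is the delicate point; everything else is routine bookkeeping carried over from the proof of Theorem~\ref{thm:max_jump_in_mdp}.
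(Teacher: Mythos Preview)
Your proposal is correct and follows essentially the same route as the paper: reduce to the one-sided case, lower-bound $Q^*(x_2)$ via the relocate-then-follow-$\pi^*$ policy, model the relocation as a biased random walk on hop-counts with drift $(1-\delta)-\lceil d_{\max}/d_{\min}\rceil\,\delta$ to obtain $\mathrm{E}[N]\le\tilde k$, and then pass from the expectation to $\gamma^{\tilde k}$ via Jensen. The only discrepancies are cosmetic: the paper computes $\tilde k$ by solving the hitting-time recurrence $\tau(D)=(1-\delta)(1+\tau(D{-}1))+\delta(1+\tau(D{+}\lceil d_{\max}/d_{\min}\rceil))$ explicitly rather than via an optional-stopping argument, and it attributes the hypothesis $r_{\min}\le 0$ to the Jensen step (convexity of $n\mapsto r_{\min}\tfrac{1-\gamma^n}{1-\gamma}+\gamma^n Q^*(x_1)$), not to the reward-accounting step as you state.
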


\begin{proof}
Proving Theorem~\ref{thm:max_jump_in_mdp_possibility_of_getting_far_from_goal} requires a different approach than Theorem~\ref{thm:max_jump_in_mdp}. If we try to use the same logic, with probability $(1 - \delta)$ the agent reaches state $s'$ within $k$ steps. However, if that is not the case, the agent may end up as far away as $d_{\max} + d(s, s')$ away from $s''$. To use the same method of proof, we can break $d(s, s'')$ into $\ceil*{\frac{d_{\max}}{d_{\min}}}$ intervals of length $d_{\min}$, and assume the agent can transition between the edges of these intervals every $k$ steps with probability $(1 - \delta)$. Thus, the problem can be thought of as a 1D random walker which starts at a distance of one interval away from the goal, and with probability $(1 - \delta)$ moves one step toward the goal (for this random walker, all ``steps'' are in units of $d_{\min}$ and each time-step is equivalent to $k$ time-steps of the agent), and with probability $\delta$ moves (in the worst case) $D_{d_{\max}} \equiv \ceil*{\frac{d_{\max}}{d_{\min}}}$ steps away from the goal. Denote the location of the random walker $D$, with the goal at $D=0$ and the starting point at $D=1$ (equivalent to at most $d_{\min}$ away from the goal). The expected number of time steps to reach the goal from position $D$, denoted $\tau(D)$, obeys the following equations:

\begin{gather}
\label{eq:random_walker}
\tau(D) = (1 - \delta) (1 + \tau(D - 1)) + \delta (1 + \tau(D + D_{d_{\max}})) \\
\label{eq:random_walker_boundary}
\tau(0) = 0
\end{gather}

These equations can be solved by considering the mean field approximation of the random walker as moving in continuous space with velocity $u = (1-\delta)  (-1) + \delta D_{d_{\max}}$. For the agent to have a net velocity in the direction of the goal, we must require $u < 0$, which gives us the constraint on $d_{\max}$ stated in Assumption~\ref{ass:metric_structure_1_possibility_of_getting_far_from_goal}:

\begin{align}
    (1-\delta) (-1) + \delta D_{d_{\max}} < 0 &\Rightarrow D_{d_{\max}}  < \frac{1 - \delta}{\delta} \\ \nonumber
    &\Rightarrow \ceil*{\frac{d_{\max}}{d_{\min}}} < \frac{1 - \delta}{\delta}
\end{align}

which is satisfied when $d_{\max} < \frac{1 - \delta}{\delta} d_{\min}$. Intuitively this constraint ensures that even if there is a possibility of stochasticity transitioning the agent away rather than towards a state it is trying to reach, the probability of this happening and the amount by which the agent is distanced from its desired state are balanced in such a way that the agent can eventually reach its desired state. If that is not the case, the metric does not satisfy the notion we desire---quantifying how quickly states can be reached from other nearby states.

For the continuous mean-field dynamics, $\tau(D)$ is equivalent to the time required to traverse a distance of $(-D)$ and is given by

\begin{align}
\label{eq:random_walker_solution}
\tau(D) = \frac{D}{1 - \delta - \delta D_{d_{\max}}}.
\end{align}

Note that by substitution we can verify that Eq.~\ref{eq:random_walker_solution} is indeed a solution to Eq.~\ref{eq:random_walker}, and therefore the mean field solution is valid for the discrete dynamics as well. The expected number of time-steps in which an agent will reach $s'$ from a state $s$ such that $d(s, s') \leq d_{\min}$ can now be expressed by

\begin{align}
\label{eq:expected_steps_to_s_prime}
\tilde{k} \equiv k \tau(1) = \frac{k}{1 - \delta - \delta D_{d_{\max}}} = \frac{k}{(1-\delta) - \ceil*{\frac{d_{\max}}{d_{\min}}} \delta}
\end{align}

The logic of the proof for Theorem~\ref{thm:max_jump_in_mdp} can be applied here---without loss of generality, assume $Q^*(x_1) > Q^*(x_2)$, and therefore the agent can transition from $s_2$ to $s_1$ in $\tilde{k}$ steps, collecting a minimum reward of $r_{\min} \frac{1 - \gamma^{\tilde{k}}}{1-\gamma}$, and from state $s_1$ follow the policy for the discounted Q-value of state action pairs in state $s_1$ of $\gamma^{\tilde{k}} Q^*(x_1)$. We thus obtain that for the expected number of steps to reach the goal, $Q_{\tilde{k}}^*(x_2) \geq r_{\min} \frac{1 - \gamma^{\tilde{k}}}{1-\gamma} + \gamma^{\tilde{k}} Q^*(x_1)$. Unfortunately, this is not a true lower bound for $Q^*(x_2)$ because we used an expression lower bounding $Q^*(x_2)$ for a given number of steps to the goal and substituted the expectation of the number of steps, rather than take an expectation over the number of steps for the lower bound of $Q^*(x_2)$. However, for $r_{\min} \leq 0$, the expression $r_{\min} \frac{1 - \gamma^{k}}{1-\gamma} + \gamma^{k} Q^*(x_1)$ is a convex function of $k$, and therefore by the Jensen inequality

\begin{align}
    Q^*(x_2) \geq Q_{\tilde{k}}^*(x_2) \geq r_{\min} \frac{1 - \gamma^{\tilde{k}}}{1-\gamma} + \gamma^{\tilde{k}} Q^*(x_1).
\end{align}

Subtracting from $Q^*(x_1)$ we get

\begin{align}
    |Q^*(x_1) - Q^*(x_2)| &\geq Q^*(x_1) - \left( r_{\min} \frac{1 - \gamma^{\tilde{k}}}{1-\gamma} + \gamma^{\tilde{k}} Q^*(x_1 \right) \\ \nonumber
    &= (1 - \gamma^{\tilde{k}}) \left( Q^*_{{\max},(1,2)} - \frac{r_{\min}}{1 - \gamma} \right)
\end{align}

\end{proof}

\section{Functional forms and smoothness measures of illustrative functions}
\label{appendix:details_of_functions}

\paragraph{Sine-like function.} The analytic form of the function illustrated in Figure \ref{fig:lip_estimation} in the main text (left column) is

\begin{align}
    f(x) = A \sin(2 \pi \omega x) + m x.
\end{align}

Specifically, in Figure \ref{fig:lip_estimation}, the parameter values used are $A=3$, $\omega=2$ and $m=5$. The Lipschitz constant of the function is the largest slope, or the maximum of the first derivative of $f(x)$,

\begin{align}
    f'(x) = 2 \pi \omega A \cos(2 \pi \omega x) + m,
\end{align}

which is attained for all integer values of $x$:

\begin{align}
    L = 2 \pi \omega A + m.
\end{align}

To compute $L_{\alpha}(\alpha)$ we write the slope between any two points $x_1$ and $x_2$ and look for its maximum when the distance between them is larger than $\alpha$. In other words, we wish to maximize $M(x_1, x_2) = |f(x_1) - f(x_2)| / |x_1 - x_2|$ subject to $|x_1 - x_2| \geq \alpha$:

\begin{align}
\label{eq:slope_for_sine}
    M&(x_1, x_2) = \\ \nonumber
    &=\left |\frac{A [\sin(2 \pi \omega x_1) - \sin(2 \pi \omega x_2)] + m (x_1 - x_2)}{x_1 - x_2} \right|  \\ \nonumber
    &=\left |\frac{A [\sin(2 \pi \omega x_1) - \sin(2 \pi \omega x_2)]}{x_1 - x_2} + m \right|  \\ \nonumber
    &=\left |\frac{2A [\sin(\pi \omega \{x_1 - x_2\}) \cos(\pi \omega \{x_1 + x_2\})]}{x_1 - x_2} + m \right|  \\ \nonumber
    &=\left |\frac{2A \pi \omega [\sin(y_1) \cos(y_2)]}{y_1} + m \right|,
\end{align}

where in the last equality we substituted $y_1=\pi \omega \{x_1 - x_2\}$ and $y_2 = \pi \omega \{x_1 + x_2\}$. For simplicity we will assume $m>0$, which implies the largest $M$ obtained will be positive. This assumption allows us to remove the absolute value for easier differentiation.

To find the maximum of $M$ we take it's partial derivatives:

\begin{align}
    \frac{\partial M}{\partial y_1} &= 2 \pi \omega \cos(y_2) \left[ \frac{\cos(y_1)}{y_1} - \frac{\sin(y_1)}{y_1^2} \right] \\
    \frac{\partial M}{\partial y_2} &= - 2 \pi \omega \frac{\sin(y_1) \sin(y_1)}{y_1}
\end{align}

Equating both derivatives to zero provides two groups of solutions:

\emph{Group I}:$$
    \sin(y_1) = 0; \ \ 
    \cos(y_2) = 0$$

Substituting the solutions for group I in Equation \ref{eq:slope_for_sine} yields $M = m$. Once we examine the solutions for Group II, we will see that for every cycle of the sine, we have a solution from Group II with $M > m$.

\emph{Group II}:$$
    y_1 = \tan{y_1}; \ \ 
    \sin(y_2) = 0$$
    
Substituting the solutions for group II in Equation \ref{eq:slope_for_sine} yields

\begin{align}
    M = 2 A \pi \omega \cos(y_2) \cos(y_1) + m.
\end{align}

Because $\sin(y_2) = 0$ implies $\cos(y_2) = \pm 1$, and therefore because we are interested in the maxima of $M$ we choose the sign such that the first term is positive (this can always be done because $y_1$ and $y_2$ can be chosen independently). The maxima of $M$ can then be found by substituting the solutions (which can be found numerically) for $y_1 = \tan{y_1}$ in

\begin{align}
    M = 2 A \pi \omega |\cos(y_1)| + m.
\end{align}

For larger $y_1$, the closest solution of $y_1 = \tan{y_1}$ is closer to $\pi/2 + n \pi$ for an integer $n$, and therefore the corresponding $\cos(y_1)$ is closer to zero. This means that

\begin{align}
    L_{\alpha} = 2 A \pi \omega |\cos(y_*)| + m,
\end{align}

with $y_*$ being the first solution of $y_1 = \tan{y_1}$ such that $y_1 \geq \pi \omega \alpha$ (which is equivalent to the constraint $x_1 - x_2 \geq \alpha$. In the example shown in Figure \ref{fig:lip_estimation} in the main text, where we take $\alpha = \omega^{-1}$, we take the second solution of $y_1 = \tan{y_1}$ which is $y_* \approx 4.49$ and yields:

\begin{align}
    L_{\alpha=\omega} \approx 2 A \pi \omega |\cos(4.49)| + m \approx 13.29.
\end{align}

\paragraph{Stairs function}

The stairs function used for demonstration is a stairs function with a step of magnitude $0.1$ every interval of $0.1$ in the $x$ direction:

\begin{align}
    f(x) = A \floor*{\frac{x}{w}}
\end{align}

with $A = 0.1$ and $w = 0.1$. The function is obviously not continuous and therefore $L$ does not exist (or is considered ``infinite'').

To find $L_{\alpha}$, we must find two points that maximize $M(x_1, x_2) = |f(x_1) - f(x_2)| / |x_1 - x_2|$ subject to $|x_1 - x_2| \geq \alpha$. To understand the possible candidates for such pairs of points, let us first consider that $[x_2, x_1]$ forms an interval that contains $n$ steps  (without loss of generality, assume $x_1 > x_2$, and denote $\Delta \equiv x_1 - x_2$). In such a case we have $f(x_1) - f(x_2) = n A$, and $M = n A / \Delta$. Given an interval which contains $n$ steps, we would like to make $\Delta$ as small as possible in order to maximize $M$. This is achieved for $\Delta = (n - 1) w + \epsilon$, with $\epsilon > 0$ as small as possible (to see how this is true, consider that in a width of slightly more than the width of an entire stair we can include two jumps). Then we have

\begin{align}
\label{eq:l_alpha_for_n_steps}
M = \lim_{\epsilon \rightarrow 0} \frac{n A}{(n - 1) w + \epsilon} = \frac{n A}{(n - 1) w}.
\end{align}

It is clear from Equation \ref{eq:l_alpha_for_n_steps} that $M$ decreases with $n$, and so for a given $\alpha$, one potential choice for $x_1$ and $x_2$ would be choosing them in such a way that they form an interval which contains $\ceil{\alpha / w} + 1$ steps, for which we have $M = \frac{A (\ceil{\alpha/w} + 1)}{\ceil{\alpha/w} w} = \frac{A}{w} \left( 1 + \frac{1}{\ceil{\alpha/w}} \right)$.

The only other potential choice for the interval $[x_2, x_1]$ is to choose its width to be exactly $\alpha$. In that case we can fit $\ceil{\alpha/w}$ steps in it, and obtain $M = \frac{A \ceil{\alpha / w}}{\alpha}$.

Thus, for the stairs function we have:

\begin{align}
    L_{\alpha}(\alpha) = \max \left[ \frac{A}{w} \left( 1 + \frac{1}{\ceil{\alpha/w}} \right), \frac{A}{\alpha} \ceil{\alpha / w} \right].
\end{align}

\section{Proof of Theorem \ref{thm:compare_est_methods}}
\label{appendix:compare_est_methods_proof}

\begin{figure}[t]
\vskip 0.2in
\begin{center}
\includegraphics[width=0.32\columnwidth]{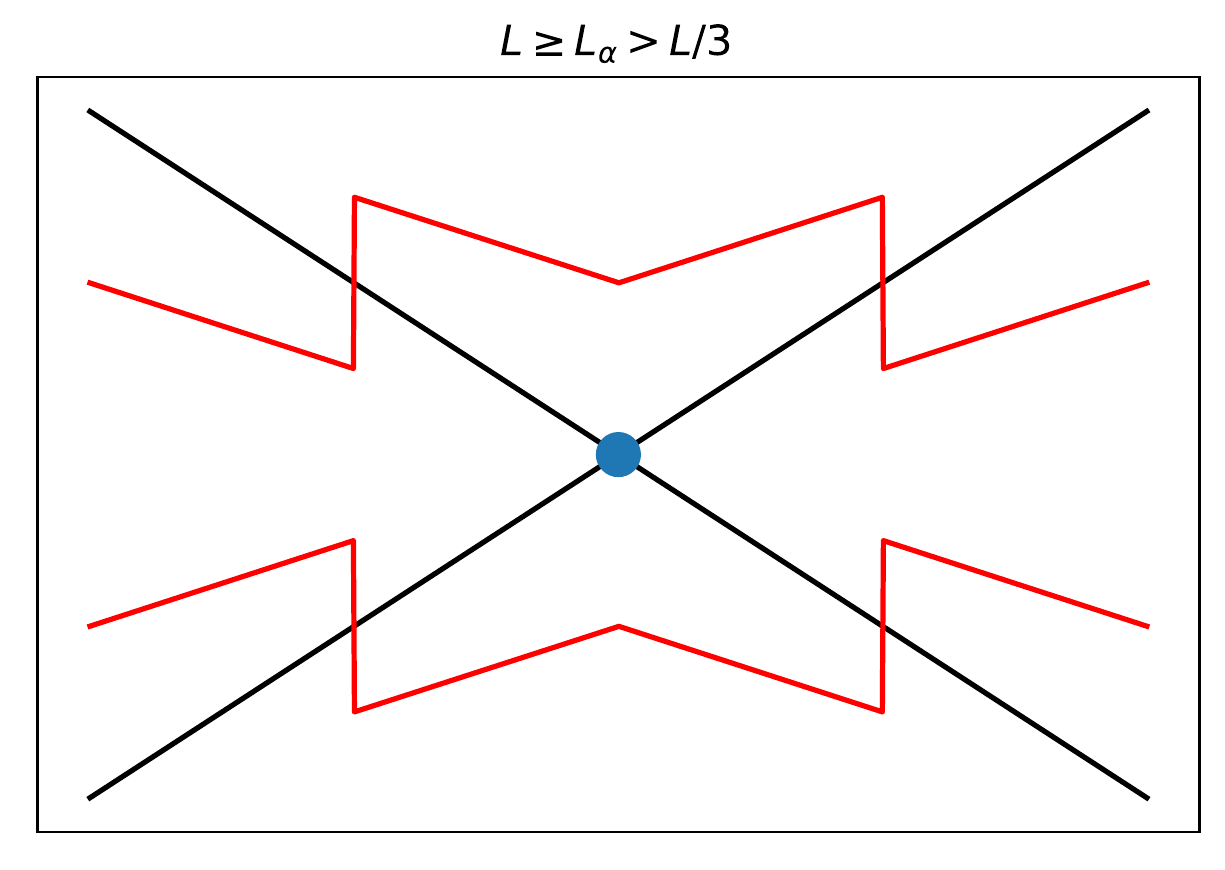}
\includegraphics[width=0.32\columnwidth]{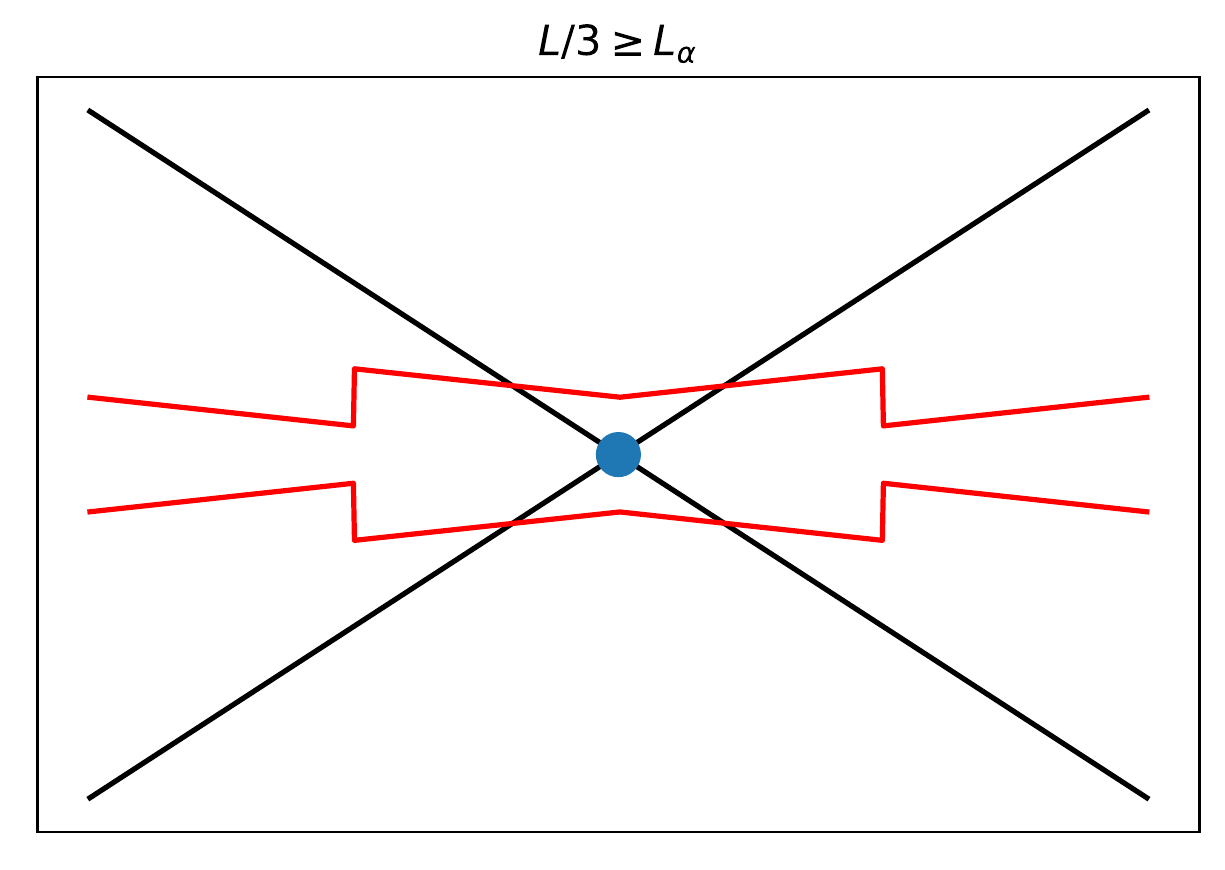}
\includegraphics[width=0.32\columnwidth]{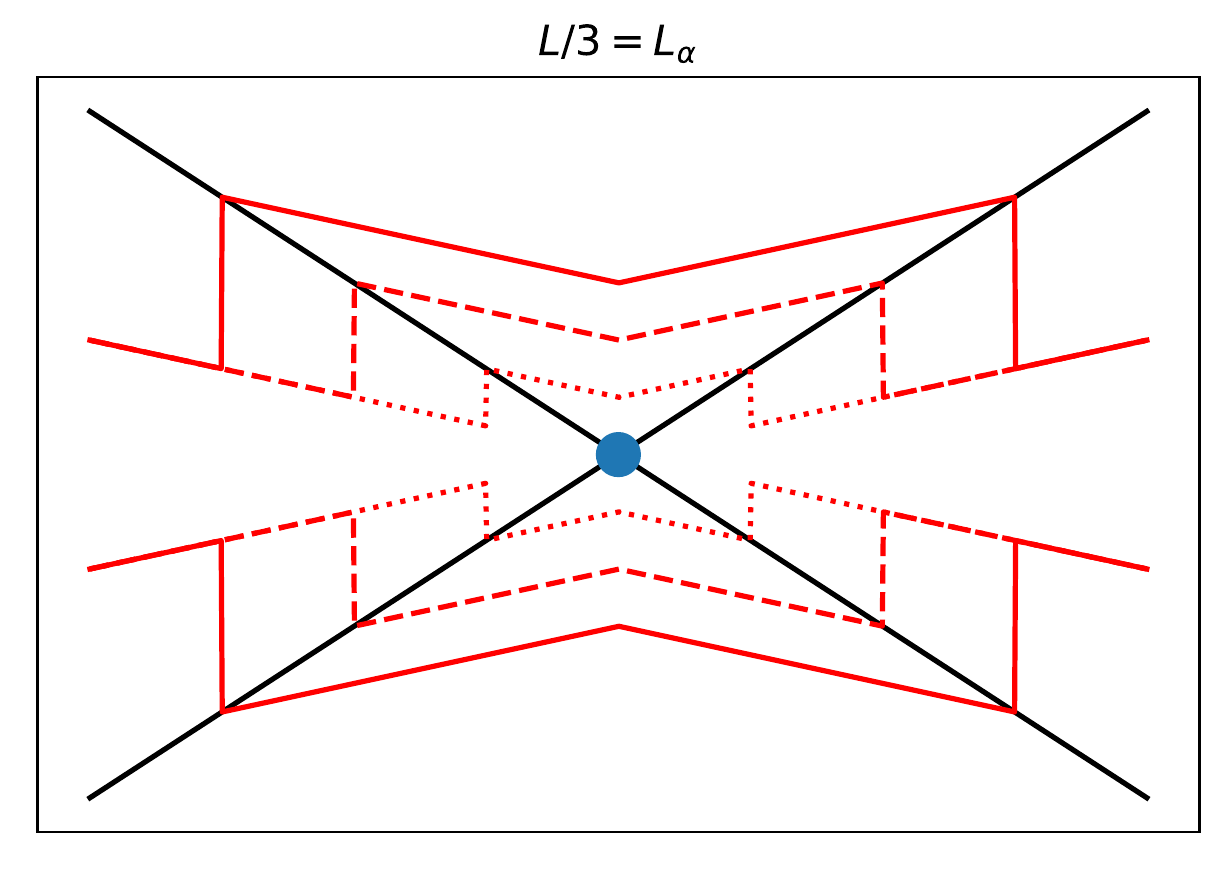}
\caption{Schematic of how $l_{\alpha}$, the maximum distance from a known point over which Lipschitz bounds are tighter than $L_{\alpha}$ bounds, as a function of the relation between $L$ and $L_{\alpha}$ (see also Theorem \ref{thm:compare_est_methods}).}
\label{fig:schematic_of_cases_L_vs_L_alpha}
\end{center}
\vskip -0.2in
\end{figure}

\paragraph{Theorem \ref{thm:compare_est_methods}}

Let $f(x)$ be defined over the metric space $\mathcal{X}$, and assume the value of $f(x)$ is known at $N$ points. Assume $f(x)$ is Lipschitz continuous with Lipschitz constant $L$. For any value of $\alpha$ and its corresponding $L_{\alpha}$ over $f(x)$, define $l_{\alpha}$ to be
\begin{align}
    l_{\alpha} = \begin{cases}
      \alpha & L \geq L_{\alpha} > L / 3 \\
      \frac{2 L_{\alpha} \alpha}{L - L_{\alpha}} & L / 3 \geq L_{\alpha}. \\
   \end{cases}
\end{align}
Then, if $\frac{C_D}{C_{D_{\mathcal{X}}}} \left( \frac{l_{\alpha}}{l_{\mathcal{X}}} \right)^D N < 1$, the fraction of volume of $\mathcal{X}$ over which the $L_{\alpha}$ bound is tighter than the Lipschitz bound is at least
\begin{align}
\label{eq:comparison_between_lip_and_l_alpha}
      1 - \frac{C_D}{C_{D_{\mathcal{X}}}} \left( \frac{l_{\alpha}}{l_{\mathcal{X}}} \right)^D N.
\end{align}

\begin{proof}
We will first prove that, if a point $x\in\mathcal{X}$ satisfies $d(x, x') > l_{\alpha}$ for every $x'$ at which $f(x')$ is known, then the $L_{\alpha}$-bound is tighter than the Lipschitz bound. We prove the result for the upper bound, but the proof is similar for the lower bound.

Consider the case in which the value of $f(x)$ is known at only one point, $x_0$. When $d(x_0, x) > \alpha$, the upper bounds $\hat{f}_{UB}^{\alpha}(x_0)$ and $\hat{f}_{UB}(x_0)$ are given by $d(x_0, x) L_{\alpha}$ and $d(x_0, x) L$, respectively, and because $L > L_{\alpha}$, we have that $\hat{f}_{UB}^{\alpha}(x_0) > \hat{f}_{UB}(x)$ when $d(x_0, x) > \alpha$, for any value of $L_{\alpha}$.

When $L / 3 \geq L_{\alpha}$, we can obtain an even smaller value for $l_{\alpha}$ by equating Equations~\ref{eq:lip_est} and~\ref{eq:lip_alpha_est} and solving for $d(x_0, x)$ to obtain $l_{\alpha} = \frac{2 L_{\alpha} \alpha}{L - L_{\alpha}}$. Figure \ref{fig:schematic_of_cases_L_vs_L_alpha} in the appendix shows a schematic demonstrating the different cases for $L \geq L_{\alpha} > L / 3$ and $L / 3 \geq L_{\alpha}$. Note that the different cases are determined only by comparing $L$ and $L_{\alpha}$, but are not affected by $\alpha$ itself, as demonstrated in Figure \ref{fig:schematic_of_cases_L_vs_L_alpha} (right).

Now assume the value of $f(x)$ is known at $N$ points $x_n$, such that for all points $d(x_n, x) > l_{\alpha}$. Denote  $\hat{f}_{UB}^{\alpha, x_i}(x)$ and $\hat{f}_{UB}^{x_i}(x)$ as the upper bound we would have computed for $f(x)$ if only $f(x_i)$ were known. Because $\hat{f}_{UB}^{\alpha, x_i}(x) > \hat{f}_{UB}^{x_i}(x)$ for all $i$, we also have that $\min_i \hat{f}_{UB}^{\alpha, x_i}(x) > \min_i \hat{f}_{UB}^{x_i}(x)$ and therefore $\hat{f}_{UB}^{\alpha}(x) > \hat{f}_{UB}(x)$.

A similar proof can be provided for the lower bound, and the two bounds together imply that Lipschitz estimation may only provide tighter bounds in balls of radius $l_{\alpha}$ around known points. 

The largest volume of the space over which Lipschitz estimation is tighter than $L_{\alpha}$ estimation is achieved if all balls of radius $l_{\alpha}$ centered at the $N$ points where $f(x)$ is known, and that volume will be $N C_D l_{\alpha}^D$, and the fraction that volume is of $\mathcal{X}$ is $\frac{V_{\mathcal{X}} - N C_D l_{\alpha}^D}{V_{\mathcal{X}}}$, and algebraic manipulation yields Equation~\ref{eq:comparison_between_lip_and_l_alpha}.
\end{proof}

\section{Proof of Proposition \ref{thm:max_lip_alpha}}
\label{appendix:proof_max_lip_alpha}

As part of the proof, we will start with the following Lemma:

\begin{lemma}
\label{thm:max_q_diff_any_points}
For an MDP satisfying the assumptions of Proposition~\ref{thm:max_lip_alpha}, for any $x, x' \in \mathcal{X}$,
\begin{align}
\label{eq:max_q_diff_any_points}
|Q^*(x) - Q^*(x')| \leq \Delta Q^*_{\max} \ceil*{\frac{d(x, x')}{d_{\min}}}.
\end{align}
\end{lemma}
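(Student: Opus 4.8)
The natural approach is a chaining (telescoping) argument that lifts the ``single-hop'' bound of Corollary~\ref{thm:max_jump_in_mdp_global} to arbitrary distances. First I would set $n \equiv \ceil*{\frac{d(x, x')}{d_{\min}}}$ and use convexity of $\mathcal{X}$ to place $n+1$ points $x = x_0, x_1, \dots, x_n = x'$ evenly along the straight segment joining $x$ to $x'$, so that $d(x_i, x_{i+1}) = d(x, x')/n \le d_{\min}$ for every $i$. Since the metric on $\mathcal{X}$ dominates the metric on $\mathcal{S}$ (for the canonical choice $d_{\mathcal{X}}(x, x') = d_{\mathcal{S}}(s, s') + C\, d_{\mathcal{A}}(a, a')$ with $C \ge 0$ one has $d_{\mathcal{S}}(s_i, s_{i+1}) \le d_{\mathcal{X}}(x_i, x_{i+1})$), each consecutive pair satisfies $d_{\mathcal{S}}(s_i, s_{i+1}) \le d_{\min}$, so Corollary~\ref{thm:max_jump_in_mdp_global} applies and gives $|Q^*(x_i) - Q^*(x_{i+1})| \le \Delta Q^*_{\max}$.

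Then the triangle inequality for $|\cdot|$ finishes it:
\[
|Q^*(x) - Q^*(x')| \;\le\; \sum_{i=0}^{n-1} |Q^*(x_i) - Q^*(x_{i+1})| \;\le\; n\,\Delta Q^*_{\max} \;=\; \Delta Q^*_{\max}\,\ceil*{\frac{d(x, x')}{d_{\min}}}.
\]
The borderline cases $n = 1$ (i.e., $d(x, x') \le d_{\min}$) and $d(x, x') = 0$ reduce immediately to Corollary~\ref{thm:max_jump_in_mdp_global} or are trivial, so the statement holds for all $x, x' \in \mathcal{X}$.

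I expect the only real subtlety to be the geometric step: we need a path from $x$ to $x'$ whose consecutive nodes are within $d_{\min}$ of one another \emph{and} stay inside $\mathcal{X}$ — this is exactly what convexity of $\mathcal{X}$ buys us, and it is why that hypothesis is imposed in Proposition~\ref{thm:max_lip_alpha}. (Without convexity one would instead have to replace $d(x, x')$ by the length of a shortest in-domain path connecting $x$ and $x'$, which could be strictly larger, or cover the segment by balls in a more careful way.) A secondary point worth stating explicitly is the reduction from the $\mathcal{X}$-distance appearing in the hypothesis to the $\mathcal{S}$-distance required by Corollary~\ref{thm:max_jump_in_mdp_global}; everything else is bookkeeping.
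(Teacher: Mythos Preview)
Your proposal is correct and follows essentially the same chaining argument as the paper: use convexity of $\mathcal{X}$ to lay intermediate points along the segment from $x$ to $x'$ so that consecutive hops are at most $d_{\min}$, apply Corollary~\ref{thm:max_jump_in_mdp_global} to each hop, and telescope. The only cosmetic differences are that the paper spaces the intermediate points exactly $d_{\min}$ apart (leaving a shorter final hop) rather than evenly, and that you make explicit the reduction from $d_{\mathcal{X}}$ to $d_{\mathcal{S}}$ needed to invoke Corollary~\ref{thm:max_jump_in_mdp_global}, which the paper leaves implicit.
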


\emph{Proof of Lemma \ref{thm:max_q_diff_any_points}.} Define $i_* \equiv \ceil*{\frac{d(x, x')}{d_{\min}}}$. Let $x_i = x + i (x' - x)$ for $i = 0, 1,...,i_* - 1$. Because $\mathcal{X}$ is convex, $x_i \in \mathcal{X}$ for all $i$. Because $d(x_i, x_{i+1}) = d_{\min}$, by Corollary~\ref{thm:max_jump_in_mdp_global}, for all  $i$ we have that $Q^*(x_{i+1}) \leq Q^*(x_{i}) + \Delta Q^*_{\max}$. Thus, since $x_0 = x$, we can write $Q^*(x_{i}) \leq Q^*(x) + i \Delta Q^*_{\max}$. Furthermore, because $d(x_{{i_*} - 1}, x') \leq d_{\min}$, we have one additional inequality: $Q^*(x') \leq Q^*(x_{i_* - 1}) + \Delta Q^*_{\max} \leq Q^*(x) + \Delta Q^*_{\max} i_*$. A similar inequality can be derived for the lower bound of $Q^*(x')$. \qed

\emph{Proof of Proposition \ref{thm:max_lip_alpha}.} From Equation \ref{eq:lip_alpha}, all we need to show is that for all $x, x' \in \mathcal{X}$ such that $d(x, x') \geq d_{\min}$, the inequality $\frac{|Q^*(x) - Q^*(x')|}{d(x, x')} \leq \frac{2 \Delta Q^*_{\max}}{d_{\min}}$ holds. By Lemma~\ref{thm:max_q_diff_any_points}:

\begin{align}
\label{eq:l_alpha_bound_proof_ineq}
    \frac{|Q^*(x) - Q^*(x')|}{d(x, x')} \leq \Delta Q^*_{\max} \ceil*{\frac{d(x, x')}{d_{\min}}} \frac{1}{d(x, x')}.
\end{align}

For $d(x, x') \geq d_{\min}$ , the function $\ceil*{\frac{d(x, x')}{d_{\min}}} \frac{1}{d(x, x')}$ obtains its supremum at $d(x, x') = \lim_{\epsilon \rightarrow 0} d_{\min} + \epsilon$, and the value of the supremum is $\frac{2}{d_{\min}}$. Substituting this value in Equation~\ref{eq:l_alpha_bound_proof_ineq} completes the proof. \qed

\section{Proof of Proposition \ref{thm:zooming_prop}}
\label{appendix:proof_zooming}

\paragraph{Proposition \ref{thm:zooming_prop}.}
In the ZoomRL algorithm introduced in \citet{touati2020zooming}, for any value of $\alpha$ such that $L_{\alpha} \leq L/2$, substituting $L_{\alpha}$-bounds for $L$ results in improved regret for at least the first $\frac{2}{3 \alpha}$ episodes.

\begin{proof}

The ZoomRL algorithm works by partitioning the state-action space into balls of decreasing radii, and constructing value estimates for each ball. Actions are selected optimistically by using an Lipschitz-based upper bound on the Q values of state-actions in each ball, called the \emph{index} of the ball. Therefore, to prove Proposition \ref{thm:zooming_prop}, we just need to prove that the indices obtained using $L_{\alpha}$ are all tighter than the ones obtained using $L$.

The proof will be divided into two parts. We will first show that for at least the first $\frac{2}{3 \alpha}$ episodes, the radius of all balls is greater or equal to $3 \alpha$ (part I). In the second part we will show that this lower bound on the radii of all balls, together with the assumption that $L_{\alpha} \leq L/2 $, implies that the $L_{\alpha}$ bounds in the expression for the indices of balls used by the ZoomRL algorithm results in tighter bounds (part II).

(I) For each step $h$, and a particular episode number $k$, let $\rad(B_{h}^{k})$ denote the radius of the corresponding ball. Each ball has a radius of half of its parent, and a new ball is created only when its parent has been visited at least $\frac{1}{\rad(B_{h}^{k})}$ times . When the learner creates new balls as quickly as possible, which happens when each ball is the only child of its parent, to generate a ball of size smaller than $3 \alpha$ we would create balls of radii $(\frac{1}{2}, \frac{1}{2}, ... \frac{1}{2^{n-1}}, \frac{1}{2^{n}})$, where $n$ is chosen such that 

\begin{align}
\label{eq:zoomrl_constraint_on_k}
\frac{1}{2^{n-1}} \geq 3 \alpha > \frac{1}{2^{n}}.
\end{align}

The sum of episodes needed to generate such a ball is $K = 1 + 2 + 4 + 2^{n+1} = 2^n - 1 \approx 2^n$. Thus as long as $K < 2^n$, the smallest ball's radius is larger than $3 \alpha$. By Equation \ref{eq:zoomrl_constraint_on_k}, the condition $K < 2^n$ can be written as

$$ K < 2^n \leq \frac{2}{3 \alpha}. $$

(II) We now show that if $L_{\alpha} \leq L/2 $ and $3 \alpha \leq \rad(B_{h}^{k})$ for all balls, then substituting the $L_{\alpha}$ bounds in the expression for the indices of balls used by the ZoomRL algorithm results in tighter bounds.

The expression for the indices in the ZoomRL algorithm is

\begin{align}
    \idx(B) &\equiv L \cdot \rad(B) \ + \min_{B' \ | \ \rad(B') \geq  \rad(B)} \left[  \hat{Q}(B') + L \cdot \dist(B, B') \right]
\end{align}

Where $\hat{Q}(B')$ denotes the value estimate for the center of a ball, and $\dist(B, B')$ the distance between the centers of the balls. For simplicity, we drop the super- and sub- scripts from the $B_{h}^{k}$ notation for the balls.

The index of a ball is an upper bound for the value of all state-action pairs contained within it. We wish to write an analogous bound obtained using $L_{\alpha}$. We first address the alternative for the $L \cdot \rad(B)$ term. We must make sure that the loosely bounded region at a distance less than $\alpha$ from the center of the ball cannot have violation of the bound. However, because $3 \alpha \leq \rad(B)$, we know that $L_{\alpha} \cdot \rad(B) \geq 3 \alpha L_{\alpha}$, and the $L_{\alpha} \cdot \rad(B)$ properly upper bounds the difference between the upper bound on the Q-value at the center of ball $B$ and the Q-value at all other points in the ball.

Similarly, to replace $L$ with $L_{\alpha}$ in the $L \cdot \dist(B, B')$ term, we must make sure that we do not get bound violations when $\dist(B, B') < \alpha$. To do this we will split our analysis to two parts. First, we assume that $\dist(B, B') \geq \alpha$, and then we can safely replace $L$ with $L_{\alpha}$. In that case, we can see that

\begin{align}
    \idx_{L_{\alpha}}(B) &\equiv L_{\alpha} \cdot \rad(B) \ + \min_{B' \ | \ \rad(B') \geq  \rad(B)} \left[  \hat{Q}(B') + L_{\alpha} \cdot \dist(B, B') \right]
\end{align}

is a valid index (upper bound) for ball $B$. Then, because $L_{\alpha} \leq L$, the inequality

\begin{align}
    L_{\alpha} \cdot \rad(B)  +  \left[  \hat{Q}(B') + L_{\alpha} \cdot \dist(B, B') \right] \\ \nonumber
    \leq L \cdot \rad(B)  +  \left[  \hat{Q}(B') + L \cdot \dist(B, B') \right]
\end{align}
holds for all $B'$, proving that $\idx_{L_{\alpha}}(B) \leq \idx_{L}(B)$, or, in other words, the $L_{\alpha}$ index is tighter than the index obtained using using $L$.

In the case that $\dist(B, B') < \alpha$, we can upper bound the contribution to the bound of the $L_{\alpha} \cdot \dist(B, B')$ by $3 \alpha L_{\alpha}$. Using the upper bound to the term in the index, makes it a valid upper bound. Now all that remains is to demand that it is always tighter than the Lipschitz based bound. In other words, we wish to find parameters for which

\begin{align}
    & L_{\alpha} \cdot \rad(B)  +  \left[  \hat{Q}(B') + L_{\alpha} \cdot \dist(B, B') \right] \\ \nonumber
    & \leq L_{\alpha} \cdot \rad(B)  +  \left[  \hat{Q}(B') + 3 \alpha L_{\alpha} \right] \\ \nonumber
    & \leq L \cdot \rad(B)  +  \left[  \hat{Q}(B') + L \cdot \dist(B, B') \right]
\end{align}

for all $B'$. To do this we will ensure the inequality between the second and third lines hold even when we set $\dist(B, B') = 0$. Eliminating $\hat{Q}(B')$ from both sides leaves us with showing that

\begin{eqnarray}
    L_{\alpha} \cdot \rad(B)  + 3 \alpha L_{\alpha}\leq L \cdot \rad(B) \\ \nonumber
    \Rightarrow \frac{L_{\alpha}}{L} \leq \frac{\rad(B)}{\rad(B) + 3 \alpha}.
\end{eqnarray}

Because $\rad(B) \geq 3 \alpha$, we have that

\begin{align}
    \frac{1}{2} = \frac{3 \alpha }{3 \alpha  + 3 \alpha } \leq \frac{\rad(B)}{\rad(B) + 3 \alpha } \
\end{align}
And since we have assumed $L_{\alpha} \leq L /2$, this completes the proof.
    
\end{proof}

\end{document}